 \newtheorem{theorem}{Theorem}
\newtheorem{definition}{Definition}
\newtheorem{lemma}{Lemma}
\newlength\dlf  % Define a new measure, dlf
\begin{document}

% If your paper is accepted and the title of your paper is very long,
% the style will print as headings an error message. Use the following
% command to supply a shorter title of your paper so that it can be
% used as headings.
%
%\runningtitle{I use this title instead because the last one was very long}

% If your paper is accepted and the number of authors is large, the
% style will print as headings an error message. Use the following
% command to supply a shorter version of the authors names so that
% they can be used as headings (for example, use only the surnames)
%
%\runningauthor{Surname 1, Surname 2, Surname 3, ...., Surname n}

\twocolumn[

\aistatstitle{Private measurement of nonlinear correlations between data hosted across multiple parties}

\aistatsauthor{ Praneeth Vepakomma \And Subha Nawer Pushpita \And  Ramesh Raskar }

\aistatsaddress{ MIT \And  MIT\And MIT } ]

\begin{abstract}
We introduce a differentially private method to measure nonlinear correlations between sensitive data hosted across two entities. We provide utility guarantees of our private estimator. Ours is the first such private estimator of nonlinear correlations, to the best of our knowledge within a multi-party setup. The important measure of nonlinear correlation we consider is \textit{distance correlation}. This work has direct applications to  private feature screening, private independence testing, private k-sample tests, private multi-party causal inference and private data synthesis in addition to exploratory data analysis. \textbf{Code access:} A link to publicly access the code is provided in the supplementary file.
 \end{abstract}

\section{Introduction}
Estimating correlations (linear and non-linear) between random variables (via their samples) is of fundamental importance in statistics and machine learning. The following is the main problem of consideration in this paper.\\ \textbf{Problem statement:}  How can non-linear correlations between two random variables be estimated with formal guarantees on preserving privacy while reasonably maintaining the utility of the estimate under the constraint that the samples from each of the random variables are correspondingly held by two different parties?  

 \begin{figure*}
    \centering
    \includegraphics{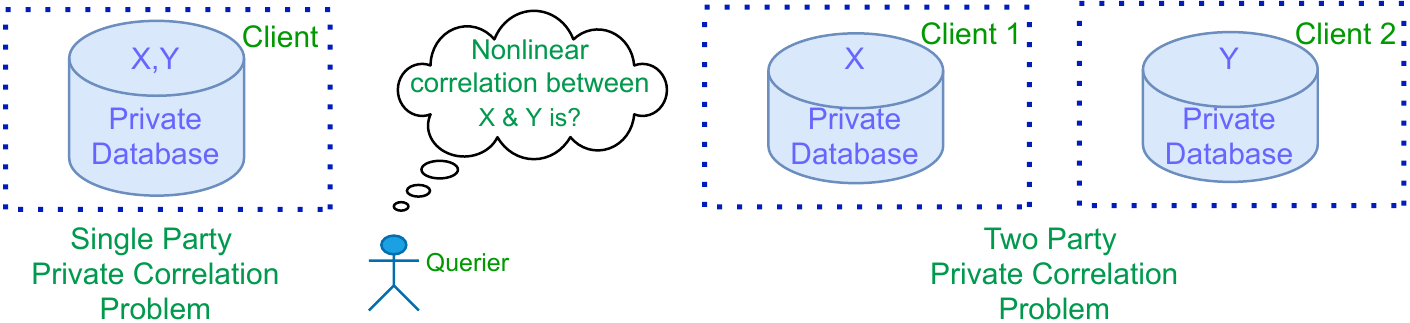}
    \caption{Private multi-party non-linear correlation problem}
    \label{fig:pncProblem}
\end{figure*}
\subsection{Motivation}
An ability to compute non-linear correlations between data hosted across multiple parties while preserving privacy opens up quite a few applications. The dependency measure we consider in this paper is called distance correlation \cite{szekely2007measuring} which is an instance of `Energy Statistics' introduced in \cite{szekely2017energy,rizzo2016energy,szekely2013energy}. Some of the important applications for estimating distance correlation in a private manner are summarized below.

\subsubsection{Benefits of privately estimating distance correlation}  

\begin{enumerate}
    
    \item \textbf{Private multi-party feature screening with distance correlation} The importance of distance correlation in optimally selecting features with a `\textit{sure independence screening}' guarantee under a model agnostic setting was shown in the works of \cite{li2012feature,zhong2015iterative,fan2018sure,lu2021conditional}. Thereby the measurement of distance correlation in a privacy preserving manner in multi-party settings would potentially allow for private feature selection. A noteworthy work by \cite{thakurta2013differentially} shows that under assumptions of feature selection procedures being \textit{stable} with respect to sub-sampling, the selection can be done privately and in some cases match the optimal non-private asymptotic sample complexity. 
%   \item private data valuation for data markets
    %\cite{ghorbani2020distributional,jia2019efficient,jia2019towards,raskar2019data}
    %Generalization bounds of performance of some classical machine learning methods do depend on distance correlation; and is therefore a good predictor of model performance prior to any training. This helps devise heuristic and formal strategies for feature selection and screening as part of data acquisition.
   
%    \cite{thakurta2013differentially,zhang2019correlated,guyon2003introduction,li2012feature,zhu2011model,cui2015model,chang2013marginal}
   
    \item \textbf{Private multi-party independence testing with distance correlation}
    A wonderful survey of property testing of distributions is given in \cite{canonne2020survey} and also covers the specific sub-topic of independence testing. 
    Hypothesis tests for testing independence of distributions using distance correlation were introduced in \cite{szekely2007measuring,szekely2013distance}. The test statistic is based on distance correlation. Thereby performing private independence testing between samples that are distributed in multiple entities requires private estimation of distance correlation. The classical Chi-Squared hypothesis test for testing independence between categorical variables has been extended to be made differentially private in \cite{gaboardi2016differentially}. The work in \cite{shen2019exact} showed an equivalence between independence testing using distance correlation and k-sample testing. Differentially private procedures for analysis of variance (ANOVA) have been introduced in \cite{swanberg}.
   
    %\cite{salehkalaibar2018hypothesis,zhang2018large,berrett2019nonparametric,dash2003robust,batu2001testing,shen2019exact,shah2020hardness,runge2018conditional,panda2019mgcpy,drton2020high,katsevich2020theoretical,xiong2019graph,neykov2020minimax,sahasranand2018extra}

    %\cite{acharya2017differentially,cai2017priv,aliakbarpour2018differentially,canonne2019private}

    \item \textbf{Private multi-party causal inference with distance correlation} The works in
\cite{kusner2015inferring,kusner2016private} privately estimate distance correlation in a single party setting, where samples from both the random variables are at at the same entity. They then use this private estimate of distance correlation to infer the causal direction. We consider the more important setting where the samples from both the random variables are at two different corresponding entities as opposed to all of them being on-premise at one entity.
\item \textbf{Private multi-party data synthesis by seeding copulas with distance correlations} Gaussian copulas have been used for private data synthesis as shown in
\cite{Asghar_Ding_Rakotoarivelo_Mrabet_Kaafar_2021,li2014differentially}. A key step in this process is to seed the Gaussian copula with a correlation matrix, and it has a significant impact on the quality of the synthesis. These current works have so far seeded it with linear measures of correlations such as Pearson's correlation or Kendall's Tau. A private multi-party measure of distance correlation allows for better seeding of the Gaussian copula for multi-party private data synthesis.
\end{enumerate}

\subsection{Contributions}
\begin{enumerate}
    \item We introduce a differentially private method to measure nonlinear correlations between sensitive data hosted across two entities.  
    \item We provide utility guarantees for our private estimator. We provide experimental results to compare the quality of our estimator against important baselines on several benchmark datasets.
\end{enumerate}
%\subsection{Related work}

%\begin{enumerate}
%    \item \textbf{Linear Correlations} Pearson's correlation
%    \item \textbf{Rank Correlations for monotonicity}
%    Spearman's rank correlation, Kendall's rank correlation.
   
%    \item \textbf{Measures of nonlinear statistical dependence} Constrained covariance, Hilbert-Schmidt independence criterion, distance correlation, maximum mean discrepancy, kernel target alignment, kernel mutual information, maximal information coefficient, total information coefficient.
%\item  \textbf{Private single-party nonlinear correlation release}
%\end{enumerate}

%\par
%DIFFERENTIALLY PRIVATE RELEASE OF HIGH-DIMENSIONAL DATASETS USING THE GAUSSIAN COPULA

%(Pearsons correlatiion between binary attributes)

%Differentially Private Synthesization of Multi-Dimensional Data using Copula Functions

%DIFFERENTIALLY-PRIVATE CANONICAL CORRELATION ANALYSIS

%(Differentially privatize linear projections that lead to maximum linear correlation)

%Differentially Private High-Dimensional Data Publication via Sampling-Based Inference

%(Private dependency graphs and junction trees-push)
\section{Preliminaries}
We start by providing definitions of population distance covariance and population distance correlation as these are central to rest of the paper. We then share two existing non-private sample estimators for estimating the same. 
\subsection{Non-private estimators of distance correlation}
  %We then propose private sample estimators of population distance covariance and populations distance correlation in the following sections and study them accordingly.

\textbf{Population distance covariance} 

	For random variables $\mathbf{x} \in \mathbb{R}^d$ and $\mathbf{y} \in \mathbb{R}^m$ with finite first moments, the population distance covariance \cite{szekely2007measuring} between them is a non-negative number given by
$$
		\mathbb{\nu}^2(\mathbf{x},\mathbf{y})=\int_{\mathbb{R}^{d+m}}|f_{\mathbf{x},\mathbf{y}}(t,s)-f_\mathbf{x}(t)f_\mathbf{y}(s)|^2 w(t,s)dtds
$$
where $f_\mathbf{x},f_\mathbf{y}$ are characteristic functions of $\mathbf{x},\mathbf{y}$, $f_{\mathbf{x},\mathbf{y}}$ is the joint characteristic function, and $w(t, s)$ is a weight function defined as $$w(t, s) = (C(p,\alpha) C(q,\alpha) |t|^{\alpha+p}_p |s|_q^{\alpha+q})^{-1}$$ with $$C(d,\alpha) = \frac{2\pi^{d/2}\Gamma(1-\alpha/2)}{\alpha2^{\alpha}\Gamma((\alpha + d)/2)}$$ for chosen values of $\alpha$ which refers to the choice of norm considered in obtaining the distance matrices. $\Gamma$ refers to the popular complete Gamma function, that is defined to be an extension of the concept of a factorial to complex and real numbers as opposed to just the integers. Note that for random variables that admit a density, the characteristic function is the Fourier transform of the probability density function. \\\textbf{Note:} We would like to note that distance covariance between a variable and itself is referred to as distance variance. 
\\\textbf{Population distance correlation} Using this above definition of distance covariance, we have the following expression for the square of distance correlation \cite{szekely2007measuring} between random variables \[
	\left\{ \begin{array}{cc}
				  \frac{{\Omega}(\mathbf{X},\mathbf{Y})}{\sqrt{{\Omega}(\mathbf{X},\mathbf{X}){\Omega}(\mathbf{Y},\mathbf{Y})}}, 
				 & \text{if, }{\Omega}(\mathbf{X},\mathbf{X}){\Omega}(\mathbf{Y},\mathbf{Y})>0\\
	        0, & \text{if, } {\Omega}(\mathbf{X},\mathbf{X}){\Omega}(\mathbf{Y},\mathbf{Y})=0
	        \end{array} \right.
	\]

This always lies within the interval $[0,1]$ with $0$ referring to independence and $1$ referring to dependence.
\par For completeness, we provide a listing of some more popular measures of nonlinear statistical dependency in Appendix \ref{appdx:Families}. That said, this paper specifically focuses on mechanisms for privatizing the measure of distance correlation in settings when the variables $\mathbf{X}$, $\mathbf{Y}$ are hosted at different entities. In the setting when both of them are hosted on the same entity, a privatized measure of a statistical dependency called Hilbrt Schmidt Indpendence Critrion was provided in \cite{kusner2016private}. 
%$$\nu(\mathbf{X,Y}) = \int_{\mathbb{R}^{p+q}} \frac{|\phi_{X,Y}(t,s) -\phi_X(t)\phi_Y(s)|^2}{c_pc_q|t|^{p+1}|s|^{q+1}}dt ds$$ %\[

\subsection{Sample estimators of distance correlation}
Two sample estimators currently available for estimating population distance covariance in the non-private setting include:

% \mathbb{\nu}(\mathbf{x},\mathbf{y})=\int_{\mathbb{R}^{d+m}}|f_{\mathbf{x},\mathbf{y}}(t,s)-f_\mathbf{x}(t)f_\mathbf{y}(s)|^2 w(t,s)dtds
% \]
   \textbf{a.) Unbiased sample distance covariance} 
   \\Using $|\cdot|$ to represent the Euclidean norm, we first define  $a_{ij} = |X_i-X_j|$, $b_{ij} = |Y_i-Y_j|$,  $a_{i}. =\sum_{l=1}^{n} a_{i l}$,   $b_{i} .=\sum_{l=1}^{n} b_{i l}$, $a . . =\sum_{k, l=1}^{n} a_{k l}$ and $b . .=\sum_{k, l=1}^{n} b_{k l} $. We now use these quantities to define an unbiased statistical estimator of distance covariance $\hat{\Omega}(\mathbf{X,Y})$ as follows. 
   \begin{align}\label{eq:dcovAB}\hat{\Omega}(\mathbf{X,Y})&=\frac{1}{n(n-3)} \sum_{i \neq j} a_{ij} b_{ij} - \frac{2}{n(n-2)(n-3)}\nonumber\\ &\sum_{i=1}^n a_{i\cdot} b_{i\cdot}  + \frac{a_{..}b_{..}}{n(n-1)(n-2)(n-3)}   \end{align}

%$$\Omega_n(\mathbf{X,Y}) = \frac{1}{n(n-3)} \sum_{i \neq j} a_{ij}b_{ij} - \frac{2}{n(n-2)(n-3)} \sum_{i=1}^n a_{i\cdot} b_{i\cdot} + \frac{a_{..}b_{..}}{n(n-1)(n-2)(n-3)} $$
   
 \textbf{b.) Random projected distance covariance}
    A faster unbiased estimator of distance covariance  based on random projections denoted by $\overline{\Omega^{k}_{n}}(\mathbf{X,Y})$ was introduced in \cite{huang2017statistically}. 
     In order to define this sample estimator of distance covariance, we first define few constants based on $\pi$ and the Gamma function as follows. These include
     $c_{p}=\frac{\pi^{(p+1) / 2}}{\Gamma((p+1) / 2)}$ and $c_{q}=\frac{\pi^{(q+1) / 2}}{\Gamma((q+1) / 2)}$ , $C_{p}=\frac{c_{1} c_{p-1}}{c_{p}}=\frac{\sqrt{\pi} \Gamma((p+1) / 2)}{\Gamma(p / 2)}$ and $C_{q}=\frac{c_{1} c_{q-1}}{c_{q}}=\frac{\sqrt{\pi} \Gamma((q+1) / 2)}{\Gamma(q / 2)} .$ Let $u$ and $v$ be points on the hyper-spheres: $u \in \mathcal{S}^{p-1}=$ $\left\{u \in \mathbb{R}^{p}:|u|=1\right\}$ and $v \in \mathcal{S}^{q-1}$. For any vector $u$ or $v$, let $u^{T}$ or $v^{T}$  denote its transpose. We now share the non-private estimator below
     \begin{equation}
    \overline{\Omega}(\mathbf{X,Y})= \sum_{k=1}^{K} \frac{C_pC_q\Omega_n(u_k^TX,v_k^TY)}{K}     \label{eq:rpdc}
     \end{equation}
     where $u_k^TX = (u_k^TX_1 , \ldots, u_k^TX_n)$ and $u_k^TY = (u_k^TY_1 , \ldots,u_k^TY_n)$. Note that in this case $a_{ij}= |u^T(X_i-X_j)|$ and $b_{ij}= |v^T(Y_i-Y_j)|$.
\section{Method}
\subsection{Approach}  Privately estimating sample distance correlation between $\mathbf{X}\in \mathbb{R}^{n \times d}$ and $\mathbf{Y}\in \mathbb{R}^{n \times p}$ requires a private estimation of a distance covariance term in the numerator and a distance variance term in the denominator of $ \frac{\overline{{\Omega}}(\mathbf{X},\mathbf{Y})}{\sqrt{{\overline{\Omega}}(\mathbf{X},\mathbf{X}){\overline{\Omega}}(\mathbf{Y},\mathbf{Y})}}$ that depends on $\mathbf{X}$.

\textbf{Privatizing ${\overline{\Omega}\mathbf{(X,Y)}}$: }For the distance covariance in the numerator, we provide a differentially private estimator of ${\overline{\Omega}\mathbf{(X,Y)}}$ that we denote by ${\overline{\Omega}^{dp}\mathbf{(X,Y)}}$. Note that ${\overline{\Omega}\mathbf{(X,Y)}}$ is the non-private `random projected distance covariance estimator' given in equation \ref{eq:rpdc} to estimate population distance covariance ${{\Omega}\mathbf{(X,Y)}}$. Note that in the setting we consider, $\mathbf{X}$ and  $\mathbf{Y}$ are hosted at two different entities, as opposed to both of them being available on-premise at a single site. We provide details on our differentially private estimator ${\overline{\Omega}^{dp}\mathbf{(X,Y)}}$ in section 3.2.1 and provide our utility proofs for it in Theorems 1, 2 \& 3. 

\textbf{Privatizing ${\overline{\Omega}\mathbf{(Y,Y)}}$: } The term of $\overline{\Omega}(\mathbf{Y},\mathbf{Y})$ does not require private estimation as it is computed on-premise by the entity that holds $\mathbf{Y}$. The entity that holds $\mathbf{X}$ (say, Alice) makes a one-way communication of our proposed private estimates for  ${\overline{\Omega}^{dp}\mathbf{(X,Y)}}$,  ${\overline{\Omega}^{dp}\mathbf{(X,X)}}$ to the entity that holds $\mathbf{Y}$ (say, Bob). Bob computes a non-private and  ${\overline{\Omega}\mathbf{(Y,Y)}}$ on-premise. These three estimates can be put together by Bob to privately estimate the distance correlation $\overline{\rho}^{dp}\mathbf{(X,Y)}$ as 	\[
	\left\{ \begin{array}{cc}
				  \frac{\overline{\Omega}^{dp}(\mathbf{X},\mathbf{Y})}{\sqrt{\overline{\Omega}^{dp}(\mathbf{X},\mathbf{X})\overline{\Omega}^{dp}(\mathbf{Y},\mathbf{Y})}}, 
				 & \text{if, }\overline{\Omega}^{dp}(\mathbf{X},\mathbf{X})\overline{\Omega}^{dp}(\mathbf{Y},\mathbf{Y})>0\\
	        0, & \text{if, } \overline{\Omega}^{dp}(\mathbf{X},\mathbf{X})\overline{\Omega}^{dp}(\mathbf{Y},\mathbf{Y})=0
	        \end{array} \right.
	\] 
	Note that Bob does not reveal the estimated private distance correlation to Alice. In a scenario, where it needs to reveal it to Alice (or anyone in the public), it could do that by using the private estimator in the subsection on privatizing ${\overline{\Omega}\mathbf{(X,X)}}$ with respect to $\mathbf{X}$. This approach can be used by Bob to privatize ${\overline{\Omega}\mathbf{(Y,Y)}}$ with respect to $\mathbf{Y}$ instead.
%We provide concentration bounds on the utility of our estimator.  In addition to these theoretical guarantees on privacy and utility of our proposed estimator; we experimentally compare the privacy-utility tradeoffs of our estimator across several benchmark datasets and baselines. We then re-express distance covariance using an equivalent definition via a popular dependency measure called Hilbert-Schmidt Independence Criterion (HSIC). A differentially private estimate of distance variance of $\mathbf{X}$, which is the distance covariance between $\mathbf{X}$ and itself is then released by Alice to Bob using the following relation. 
We now describe our two proposed private estimators for $\mathbf{\overline{\Omega}^{dp}(X,Y)}$ which is in the multi-party setting and $\mathbf{\overline{\Omega}^{dp}(X,X)}$ which is in the single party setting. 
 \subsubsection{Proposed private estimator for distance covariance} 
\begin{figure*}
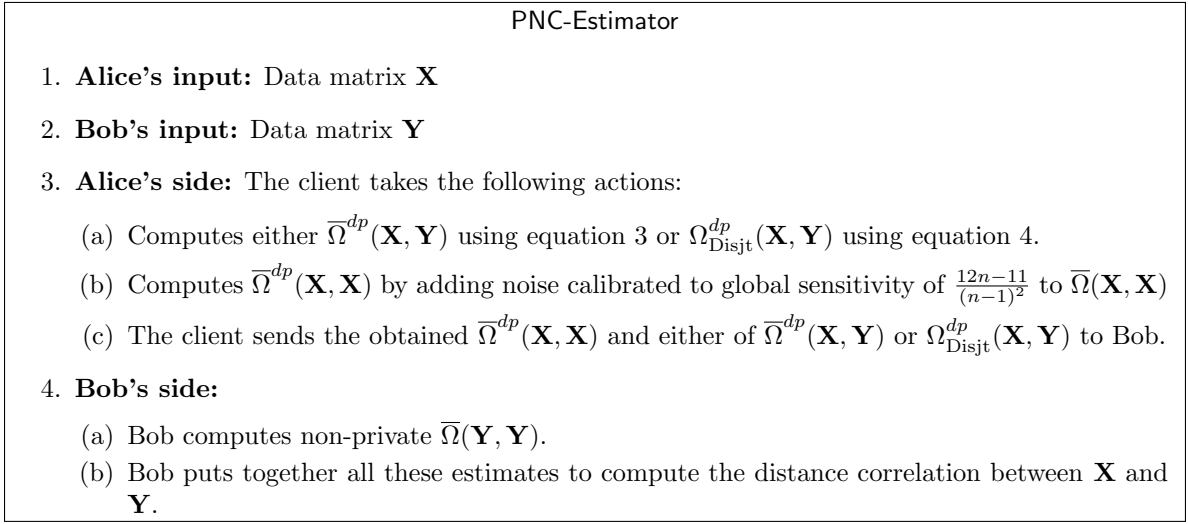

\centering
\fbox{
\begin{minipage}{6in}
\begin{center}
	$\mathsf{PNC}\mbox{-}{\mathsf{Estimator}} $ %(C(f) \leftrightarrow S(F))$
\end{center}
\begin{enumerate}
	\item {\bf Alice's input:} Data matrix $\mathbf{X}$
	\item {\bf Bob's input:} Data matrix $\mathbf{Y}$
%	\item Goal: the client wants to verify whether $a \in S$ or not without the server knowing the client's file. 
	\item {\bf Alice's side:} The client takes the following actions:
	\begin{enumerate}
		\item Computes either $\overline{\Omega}^{dp}(\mathbf{X,Y})$ using equation \ref{eq:PNC} or $\Omega^{dp}_{\textrm{Disjt}}(\mathbf{X,Y})$ using equation \ref{eq:PNCDisjoint}.
		\item Computes $\overline{\Omega}^{dp}(\mathbf{X,X})$ by adding noise calibrated to global sensitivity of $\frac{12 n-11}{(n-1)^{2}}$ to $\overline{\Omega}(\mathbf{X,X})$ 
		\item The client sends the obtained $\overline{\Omega}^{dp}(\mathbf{X,X})$ and either of $\overline{\Omega}^{dp}(\mathbf{X,Y})$ or $\Omega^{dp}_{\textrm{Disjt}}(\mathbf{X,Y})$ to Bob. 
	\end{enumerate}
	\item {\bf Bob's side:} 
	\begin{enumerate}
	    \item Bob computes non-private $\overline{\Omega}(\mathbf{Y,Y})$. 
	    \item Bob puts together all these estimates to compute the distance correlation between $\mathbf{X}$ and $\mathbf{Y}$.
	\end{enumerate}
	
\end{enumerate}
\end{minipage}
}
\caption{Protocol for private estimation of non-linear correlations between data of Alice and Bob.}
\label{fig:protsub}
\end{figure*}
Our estimation starts with Alice sending $K$ differentially private random projections of sensitive data $\mathbf{X}$ to Bob that hosts the sensitive data $\mathbf{Y}$. An average of $K$ distance covariances between the random projections of $\mathbf{X}$ and the raw data $\mathbf{Y}$ is computed at Bob's premises to get $\mathbf{\overline{\Omega}^{dp}(X,Y)}$. We then perform K differentially private random projections of the data $\mathbf{X}$ by adding the necessary noise $N$ required for privacy. The next sub-section explains the process of choosing $N^X$ in order to guarantee differential privacy. Therefore, this estimator is given by \begin{equation}\label{eq:PNC}\overline{\Omega}^{dp}(\mathbf{X,Y})=\sum_{k=1}^{K}\frac{C_pC_q\overline{\Omega}_n(u_k^TX+N^x,v_k^TY)}{K}\end{equation} where $N^x = (N^x_1,\ldots,N^x_n)$. Note that in this case $\tilde{a}_{ij} = |u^T(X_i-X_j) + (N^x_{i}-N^x_{j})|$ and $\tilde{b}_{ij} = |v^T(Y_i-Y_j)|$.

%\textbf{Two-way case:}
%$$ \Omega^{dp}(X,Y) \leq \overline{\Omega}(X,Y) + \overline{\Omega}(X,N_Y) + \overline{\Omega}(Y,N_x) + \overline{\Omega}(N_x,N_y)    $$

%\subsubsection{One-way communication:}
 %$$ \Omega^{dp}(X,Y) - \overline{\Omega}(X,Y) \leq  \overline{\Omega}(Y,N_x) $$
 
 \subsection{Choosing $\mathbf{N^X}$ for differential privacy}
Several mechanisms have been proposed for releasing random projections of data with differential privacy including
\cite{Kenthapadi,blocki2012johnson,upadhyay2013random} and more recently \cite{xu2017dppro,gondara2020differentially}. We refer to  Appendix \ref{app:dprp} for details on one such mechanism.
\subsubsection{Avoiding sequential composition}
Applying $K$ differentially private  random projections $u_k^X$ in the Equation \ref{eq:rpdc} would lead to a differentially private estimate of nonlinear correlation. Studying the utility of this estimator would be of interest. That said, if all the $K$ random projections are applied on the entire dataset $\mathbf{X}$, it would lead to an overall privacy guarantee of $K\epsilon$ due to the sequential composition property of differential privacy. This loss of privacy budget can be avoided if the samples in the dataset (i.e the rows of $\mathbf{X, Y}$) are partitioned into $K$ disjoint subsets prior to a random projection based distance covariance measured disjointly on each subset prior to averaging them out. This would lead to an improved accounting with regards to the privacy budget, as this falls under the parallel composition property of differential privacy, thereby leading to an overall $\epsilon$-DP of the estimator if each of the individual projections was also performed with $\epsilon$-DP. This estimator is summarized below.
\paragraph{Private estimator on disjoint partitions $\Omega^{dp}_{\textrm{Disjt}}(X,Y)$:}
 Our estimator requires us to first partition the $n$ records of $\mathbf{X,Y}$ into $K$ blocks of $\{\left[X_1,Y_1\right]\,\left[X_2,Y_2\right]\ldots\left[X_K,Y_K\right]\}$ in order to avoid sequential composition. Therefore, this estimator is given by \begin{equation}\label{eq:PNCDisjoint}\Omega^{dp}_{\textrm{Disjt}}(\mathbf{X,Y})=\sum_{k=1}^{K}\frac{C_pC_q\overline{\Omega}_n(u_k^tX_k+N,v_k^tY_k)}{K}\end{equation}
 
\subsection{Utility results}
We now study the utility of the private estimator $\overline{\Omega}^{dp}(\mathbf{X,Y})$ in estimating the non-private $\overline{\Omega}(\mathbf{X,Y})$ given the effect of  noise in the private estimator. The utility can be expressed by $\overline{\Omega}^{dp}(X,Y) -\overline{\Omega}(X,Y)$.
 \begin{theorem} \textbf{(Decomposition theorem)} The difference between estimators of $\overline{\Omega}^{dp}(X,Y)$ and $\overline{\Omega}(X,Y)$ can be  expressed as
\small
 \begin{align}
 &\overline{\Omega}^{dp}(X,Y) -\overline{\Omega}(X,Y) = \overline{\Omega}(N_X,Y)\nonumber \\ &+ \sum_{n=1}^{K}\frac{4C_pC_q}{Kn
 (n-2)(n-3)}\sum_{i=1}^n\left(\sum_{l=1}^n |N_i^X-N_l^X|\sum_{l=1}^n V_k^T(Y_i-Y_l)| \right)
\end{align}
 \end{theorem}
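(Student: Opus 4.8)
The plan is to reduce the claimed identity to a single structural fact: the unbiased distance-covariance statistic $\Omega_n(\cdot,\cdot)$ of equation~\eqref{eq:dcovAB} is \emph{linear} in its first distance matrix once the second is held fixed. Writing $a=(a_{ij})$ for the $X$-side distances and $b=(b_{ij})$ for the $Y$-side distances, each of the three summands in~\eqref{eq:dcovAB} --- the pairwise term $\sum_{i\neq j}a_{ij}b_{ij}$, the row-marginal term $\sum_i a_{i\cdot}b_{i\cdot}$, and the grand-total term $a_{..}b_{..}$ --- is linear in the entries $a_{ij}$. Hence for any two $X$-side matrices $a,\tilde a$ sharing a common $b$ we have $\Omega_n(\tilde a,b)-\Omega_n(a,b)=\Omega_n(\tilde a-a,\,b)$, where $\Omega_n(\cdot,b)$ is read as the formal linear functional defined by the same algebraic template. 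Averaging over the $K$ projections and factoring out $C_pC_q/K$ then gives $\overline{\Omega}^{dp}(X,Y)-\overline{\Omega}(X,Y)=\frac{C_pC_q}{K}\sum_{k=1}^K \Omega_n(\tilde a^{(k)}-a^{(k)},\,b^{(k)})$, with $a^{(k)}_{ij}=|u_k^T(X_i-X_j)|$, $\tilde a^{(k)}_{ij}=|u_k^T(X_i-X_j)+(N^x_i-N^x_j)|$, and $b^{(k)}_{ij}=|v_k^T(Y_i-Y_j)|$.

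Next I would analyse the perturbed distance entries. Writing $P_{ij}=u_k^T(X_i-X_j)$ and $M_{ij}=N^x_i-N^x_j$, the difference matrix has entries $\tilde a^{(k)}_{ij}-a^{(k)}_{ij}=|P_{ij}+M_{ij}|-|P_{ij}|$. The goal is to split this into a contribution built purely from the noise distances $|M_{ij}|=|N^x_i-N^x_j|$ --- which, reassembled through the same three-term template of~\eqref{eq:dcovAB} and averaged over $k$, is exactly $\overline{\Omega}(N_X,Y)$ --- and a residual interaction between the $X$-side noise and the $Y$-side data. I would substitute $|P_{ij}+M_{ij}|=|M_{ij}|+\big(|P_{ij}+M_{ij}|-|M_{ij}|\big)$, feed each piece separately through the linear functional $\Omega_n(\cdot,b^{(k)})$, and then carefully bookkeep the coefficients $\tfrac{1}{n(n-3)}$, $\tfrac{2}{n(n-2)(n-3)}$, and $\tfrac{1}{n(n-1)(n-2)(n-3)}$ attached to the three summands. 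The row-marginal (middle) summand is where the stated cross term, with its factor $\tfrac{4C_pC_q}{Kn(n-2)(n-3)}$ and the product $\big(\sum_l|N^x_i-N^x_l|\big)\big(\sum_l|v_k^T(Y_i-Y_l)|\big)$, should emerge.

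The main obstacle will be the nonlinearity of the norm on the $X$-side: in general $|P_{ij}+M_{ij}|\neq|P_{ij}|+|M_{ij}|$, so the two pieces above do not separate cleanly and the reverse-triangle remainder $e_{ij}=|P_{ij}+M_{ij}|-|P_{ij}|-|M_{ij}|$ must be controlled. I would handle this by a sign/case analysis on $P_{ij}$ and $M_{ij}$ in the scalar post-projection setting, showing that the remainder contributions coming from the pairwise and grand-total summands recombine or cancel, so that only the middle-summand interaction survives as the cross term, and accounting for the doubling (the factor $4$ in place of the estimator's native $2$) through the $i\leftrightarrow j$ symmetry of the double sum. As consistency checks I would verify that the resulting identity is invariant under relabelling of the samples, that setting $N^x\equiv 0$ makes both $\overline{\Omega}(N_X,Y)$ and the cross term vanish so the two estimators coincide, and that the sign and scale of the cross term are traceable back to the $-\tfrac{2}{n(n-2)(n-3)}$ row-marginal coefficient of~\eqref{eq:dcovAB}.
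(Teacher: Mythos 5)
Your opening reduction is sound, and is in fact a cleaner framing than the paper's own: the statistic in \eqref{eq:dcovAB} is bilinear in the two distance matrices, so $\overline{\Omega}^{dp}(X,Y)-\overline{\Omega}(X,Y)=\frac{C_pC_q}{K}\sum_{k}\Omega_n\bigl(\tilde a^{(k)}-a^{(k)},\,b^{(k)}\bigr)$ is exactly right. The genuine gap is the step you yourself flag as the main obstacle: the hope that the remainders $e_{ij}=|P_{ij}+M_{ij}|-|P_{ij}|-|M_{ij}|$ ``recombine or cancel'' so that only the row-marginal interaction survives. They do not. Each $e_{ij}$ is nonpositive, is strictly negative whenever $P_{ij}$ and $M_{ij}$ have opposite signs (which happens for roughly half the pairs under Gaussian noise), and depends jointly on the data, the projection, and the noise; pushing these remainders through the three summands with coefficients $\tfrac{1}{n(n-3)}$, $-\tfrac{2}{n(n-2)(n-3)}$, $\tfrac{1}{n(n-1)(n-2)(n-3)}$ leaves a data-dependent residue that no sign/case analysis can make vanish identically. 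Read as an exact equality, the displayed identity is not provable by this route, and indeed the paper never establishes it as an equality either.

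What the paper's proof actually establishes, despite the equality sign in the theorem statement, is a one-sided bound: it applies $|P_{ij}+M_{ij}|\le|P_{ij}|+|M_{ij}|$ to the two positively weighted summands and the reverse triangle inequality $|P_{ij}+M_{ij}|\ge|P_{ij}|-|M_{ij}|$ to the negatively weighted row-marginal summand, which yields $\overline{\Omega}^{dp}(X,Y)\le\overline{\Omega}(X,Y)+\overline{\Omega}(N^X,Y)+\sum_{k=1}^{K}\frac{4C_pC_q}{Kn(n-2)(n-3)}\sum_{i=1}^{n}\bigl(\sum_{l}|N_i^X-N_l^X|\bigr)\bigl(\sum_{l}|v_k^T(Y_i-Y_l)|\bigr)$. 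Note also that your account of the factor $4$ (an $i\leftrightarrow j$ doubling of the native $2$) is not its actual origin: after the reverse triangle inequality, the noise row-marginal term appears with coefficient $+\tfrac{2}{n(n-2)(n-3)}$, whereas reconstituting $\overline{\Omega}(N^X,Y)$ requires it with coefficient $-\tfrac{2}{n(n-2)(n-3)}$; adding and subtracting that term flips its sign, and the gap $2-(-2)=4$ is precisely the cross-term coefficient. If you redirect your (correct) bilinearity reduction toward this inequality --- bounding $\tilde a^{(k)}_{ij}-a^{(k)}_{ij}$ above by $|M_{ij}|$ where the coefficient is positive and below by $-|M_{ij}|$ where it is negative --- your argument closes and essentially reproduces the paper's derivation, with the added benefit of making explicit that the result is an upper bound rather than a decomposition identity.
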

 \begin{proof}
 The full proof for this theorem is given in Appendix \ref{appdx:Decomp}. 
 \end{proof}
Therefore, the error in estimation within this context depends on two error terms of $\hat{\Omega}(N_X,Y)$ and $\frac{4C_pC_q}{Kn
 (n-2)(n-3)}\sum_{i=1}^n\left(\sum_{l=1}^n |N_i^X-N_l^X|\sum_{l=1}^n V_k^T(Y_i-Y_l)| \right)$.  We now upper bound the second error term in the following theorem.
\begin{theorem} \textbf{(Error bound)} The error term \small $$\frac{4C_pC_q}{Kn
 (n-2)(n-3)}\sum_{i=1}^n\left(\sum_{l=1}^n |N_i^X-N_l^X|\sum_{l=1}^n V_k^T(Y_i-Y_l)| \right)$$ can be bounded
using a $t_1$ and $t_2$ that are both greater than $ 2\sigma\sqrt{(\frac{n-1}{n})(\log(2)-\frac{\log(\alpha)}{n})}$ and $C=\frac{4C_pC_q}{Kn(n-2)(n-3)}$. The concentration bound we obtain is given by \small
\begin{align}
    &\mathbb{P}\left\{C\sum_{i=1}^n  \left(\sum_{l=1}^{n}|N_i^X-N_l^X|
    \sum_{l=1}^{n}|V_k^t(Y_i-Y_l)|\right)<Cn^3t_1t_2\right\} \\&\geq (1-n(\alpha_1+\alpha_2-\alpha_1\alpha_2))\nonumber
\end{align} where $P(\sum_{l=1}^n|V_k^t(Y_i-Y_l)|\geq nt_1)\leq \alpha_1$ and $P(\sum_{l=1}^n|N^X_i-N^X_l|\geq nt_2)\leq \alpha_2$
\end{theorem}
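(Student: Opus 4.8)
The plan is to reduce the concentration statement for the whole double sum to independent, per-index control of its two scalar factors, and then glue the pieces together with a union bound. Writing $P_i = \sum_{l=1}^n |N_i^X - N_l^X|$ and $Q_i = \sum_{l=1}^n |V_k^t(Y_i - Y_l)|$, the quantity to be bounded is $C\sum_{i=1}^n P_i Q_i$, and the two hypotheses supplied are exactly scalar tail bounds on $P_i$ and $Q_i$, namely $\mathbb{P}(Q_i \geq n t_1) \leq \alpha_1$ and $\mathbb{P}(P_i \geq n t_2) \leq \alpha_2$. First I would introduce the per-index ``good'' event $G_i = \{P_i < n t_2\} \cap \{Q_i < n t_1\}$, under which the product satisfies the deterministic inequality $P_i Q_i < (n t_2)(n t_1) = n^2 t_1 t_2$.

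Next I would pass from the per-index bound to the global one. On the intersection $\bigcap_{i=1}^n G_i$ we may sum the $n$ per-index inequalities to get $\sum_{i=1}^n P_i Q_i < n \cdot n^2 t_1 t_2 = n^3 t_1 t_2$, and multiplying by $C$ yields precisely the event $\{C\sum_i P_i Q_i < C n^3 t_1 t_2\}$ whose probability we want to lower bound. Hence it suffices to show $\mathbb{P}(\bigcap_i G_i) \geq 1 - n(\alpha_1 + \alpha_2 - \alpha_1\alpha_2)$, and the problem becomes purely one of bounding the failure probability $\mathbb{P}((\bigcap_i G_i)^c) \leq \sum_{i=1}^n \mathbb{P}(G_i^c)$ via the union bound.

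The key step in the probability accounting is the per-index failure probability. Here $G_i^c = \{P_i \geq n t_2\} \cup \{Q_i \geq n t_1\}$, and because the first event depends only on the injected noise $N^X$ while the second depends only on Bob's data $Y$, the two are independent. Inclusion--exclusion then gives $\mathbb{P}(G_i^c) = \mathbb{P}(P_i \geq n t_2) + \mathbb{P}(Q_i \geq n t_1) - \mathbb{P}(P_i \geq n t_2)\,\mathbb{P}(Q_i \geq n t_1)$, and since $g(a,b) = a + b - ab = 1 - (1-a)(1-b)$ is nondecreasing in each argument on $[0,1]$, substituting the hypothesized bounds $\alpha_2$ and $\alpha_1$ yields $\mathbb{P}(G_i^c) \leq \alpha_1 + \alpha_2 - \alpha_1\alpha_2$. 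Summing over the $n$ indices gives the claimed $n(\alpha_1 + \alpha_2 - \alpha_1\alpha_2)$ failure budget, and taking complements delivers the theorem. The role of the hypothesis $t_1, t_2 > 2\sigma\sqrt{(\frac{n-1}{n})(\log 2 - \frac{\log\alpha}{n})}$ is to guarantee that the two scalar tail hypotheses actually hold with the stated $\alpha_1, \alpha_2$: it is the threshold emerging from a sub-Gaussian/maximal concentration inequality applied to $P_i$ and $Q_i$.

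The main obstacle I expect is establishing those two scalar tail bounds quantitatively, rather than the combinatorial gluing, which is routine. The difficulty is that the summands making up $P_i = \sum_l |N_i^X - N_l^X|$ are not independent --- they all share the common term $N_i^X$ --- so a naive per-term Chernoff bound does not apply directly; one must treat $P_i$ as a sum of absolute values of correlated Gaussian increments and control it with a maximal/sub-Gaussian inequality, which is where the $\sqrt{(n-1)/n}$ factor and the $\log 2$ term in the threshold originate. A secondary point requiring care is the explicit justification that the noise $N^X$ is drawn independently of $Y$, since the independence of the two events in $G_i^c$ --- and hence the clean $\alpha_1 + \alpha_2 - \alpha_1\alpha_2$ form, as opposed to the weaker $\alpha_1 + \alpha_2$ from a plain union bound --- rests entirely on it.
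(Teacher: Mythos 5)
Your proof is correct and, for the statement as given, essentially the same as the paper's: the paper's Lemma~1 is precisely your union bound over the $n$ indices, and its per-index step ($P(P_iQ_i < n^2t_1t_2) \geq P(P_i<nt_2)P(Q_i<nt_1) \geq (1-\alpha_1)(1-\alpha_2)$, using independence of the noise and the $Y$-dependent sums) is your inclusion--exclusion argument in complement form. The only piece you defer --- establishing the two scalar tail bounds --- is where the paper's proof spends most of its effort, and your diagnosis of it is accurate: the paper decouples the correlated summands via the triangle inequality, $\sum_{l\neq i}|N_i^X-N_l^X| \leq (n-1)|N_i^X| + \sum_{l\neq i}|N_l^X|$, then runs a Chernoff/MGF argument using the folded-Gaussian bound $\mathbb{E}[e^{t|X|}]\leq 2e^{\sigma^2 t^2/2}$, obtaining the tail $2^n e^{-n^2t^2/(4\sigma^2(n-1))}$; the $2^n$ factor is exactly the source of the $\log 2$ term and the optimization over the Chernoff parameter the source of the $(n-1)/n$ factor in the threshold $t > 2\sigma\sqrt{(\tfrac{n-1}{n})(\log 2 - \tfrac{\log\alpha}{n})}$, as you predicted. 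Since the theorem statement supplies $\alpha_1$ and $\alpha_2$ directly as hypotheses (``where $P(\cdot\geq nt_1)\leq\alpha_1$ and $P(\cdot\geq nt_2)\leq\alpha_2$''), your gluing argument alone already proves the displayed concentration inequality; the paper's additional Chernoff work serves to show those hypotheses are achievable for Gaussian noise at the stated threshold.
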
  

\begin{proof}
\textbf{Sketch of proof}
\end{proof}

 \textbf{Bounding the error term of $\overline{\Omega}(N_X,Y)$}: The error term of $\overline{\Omega}(N_X,Y)$ can be bounded using a simplification of our above result by setting $\mathbf{X}$ to be zero in the decomposition theorem and using the same strategy as the error bound in Theorem 2. That said, since the noise parameters can be released in differential privacy, a reasonable estimate of $\overline{\Omega}(N_X,Y)$ can be estimated on the premises of Bob, that holds $Y$. This analysis is provided in the Appendix A.2.

 \subsubsection{Privatizing $\overline{\Omega}(\mathbf{X,X})$}
 The problem of privatizing ${\overline{\Omega}\mathbf{(X,X)}}$ is simpler than that of privatizing ${\overline{\Omega}\mathbf{(X,Y)}}$ in the sense that in the former case, the entire data $\mathbf{X}$ is hosted on one entity as opposed to $\mathbf{X}$ and $\mathbf{Y}$ being hosted on two entities. For the private estimation of distance variance terms ${\overline{\Omega}^{dp}\mathbf{(X,X)}}$, we first use an equivalence between distance covariance and another popular dependency measure called Hilbert-Schmidt Independence Criterion (HSIC) \cite{gretton2005measuring}. We then use the global sensitivity of this equivalently obtained HSIC to privatize the distance variance. 
 We begin by defining the empirical estimate of the HSIC.
 
 Given unique positive definite kernels $k, l$ in the context of kernel methods and reproducing kernel Hilbert space (RKHS) theory in machine learning, we have the following definition for the sample estimator of HSIC. 
 \begin{definition} (\textbf{HSIC} \cite{gretton2005measuring})
 Let $Z:=\left\{\left(x_{1}, y_{1}\right), \ldots,\left(x_{m}, y_{m}\right)\right\} \subseteq \mathcal{X} \times \mathcal{Y}$ be a series of $m$ independent observations drawn from $p_{x y} .$ An estimator of HSIC, written $\operatorname{HSIC}(Z, \mathcal{F}, \mathcal{G})$, is given by
$$
\operatorname{HSIC}(Z, \mathcal{F}, \mathcal{G}):=(m-1)^{-2} \operatorname{tr} K H L H
$$
where $H, K, L \in \mathbb{R}^{m \times m}, K_{i j}:=k\left(x_{i}, x_{j}\right), L_{i j}:=l\left(y_{i}, y_{j}\right)$ and $H$ is a double-centering matrix. 
 \end{definition}
We assume $k, l$ are bounded above by 1 (e.g., the squared exponential kernel, the Matern kernel [27]). 
A classic way to calibrate the amount of noise required to achieve $\epsilon-$differential privacy \cite{dwork2014algorithmic,dwork2006differential,dwork2008differential} is to add a noise with a variance of $\frac{\Delta}{\epsilon}$ where $\Delta$ is the global sensitivity of the query. The basic definitions for differential privacy, global sensitivity are provided as a primer in the Appendix C. That said, the popular book of \cite{dwork2014algorithmic} is a wonderful resource on differential privacy. \\
\textbf{Global sensitivity of HSIC:} The global sensitivity of HSIC was derived in \cite{kusner2015inferring,kusner2016private} to be at most $\frac{12 n-11}{(n-1)^{2}}$. Specifically,
$$
\left|\widehat{H S I C}_{k, l}\left(\mathbf{x}, \mathbf{y}\right)-\widehat{H S I C}_{k, l}\left({\mathbf{x}}^{\prime}, {\mathbf{y}}^{\prime}\right)\right| \leq \frac{12 n-11}{(n-1)^{2}}
$$ for all neighboring \cite{dwork2014algorithmic} datasets. 

\begin{figure*}
% first row: 3 subfigures
\begin{subfigure}{0.3\textwidth}
   \includegraphics[width=\linewidth]{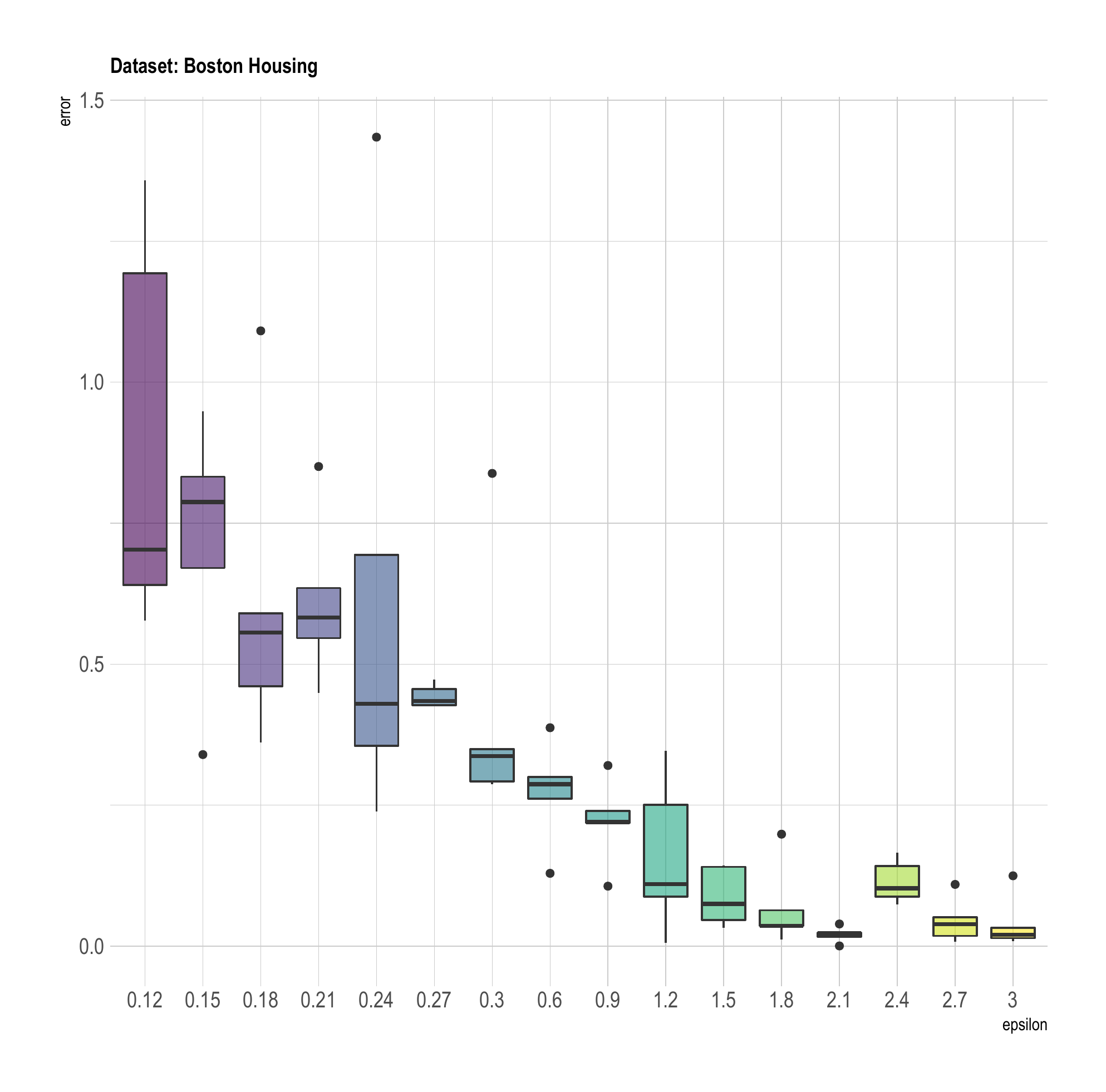}
   \centering
   \caption{Privacy-utility tradeoff on the Boston Housing dataset.} \label{fig:x_a}
\end{subfigure}
\hspace*{\fill}
\begin{subfigure}{0.3\textwidth}
   \includegraphics[width=\linewidth]{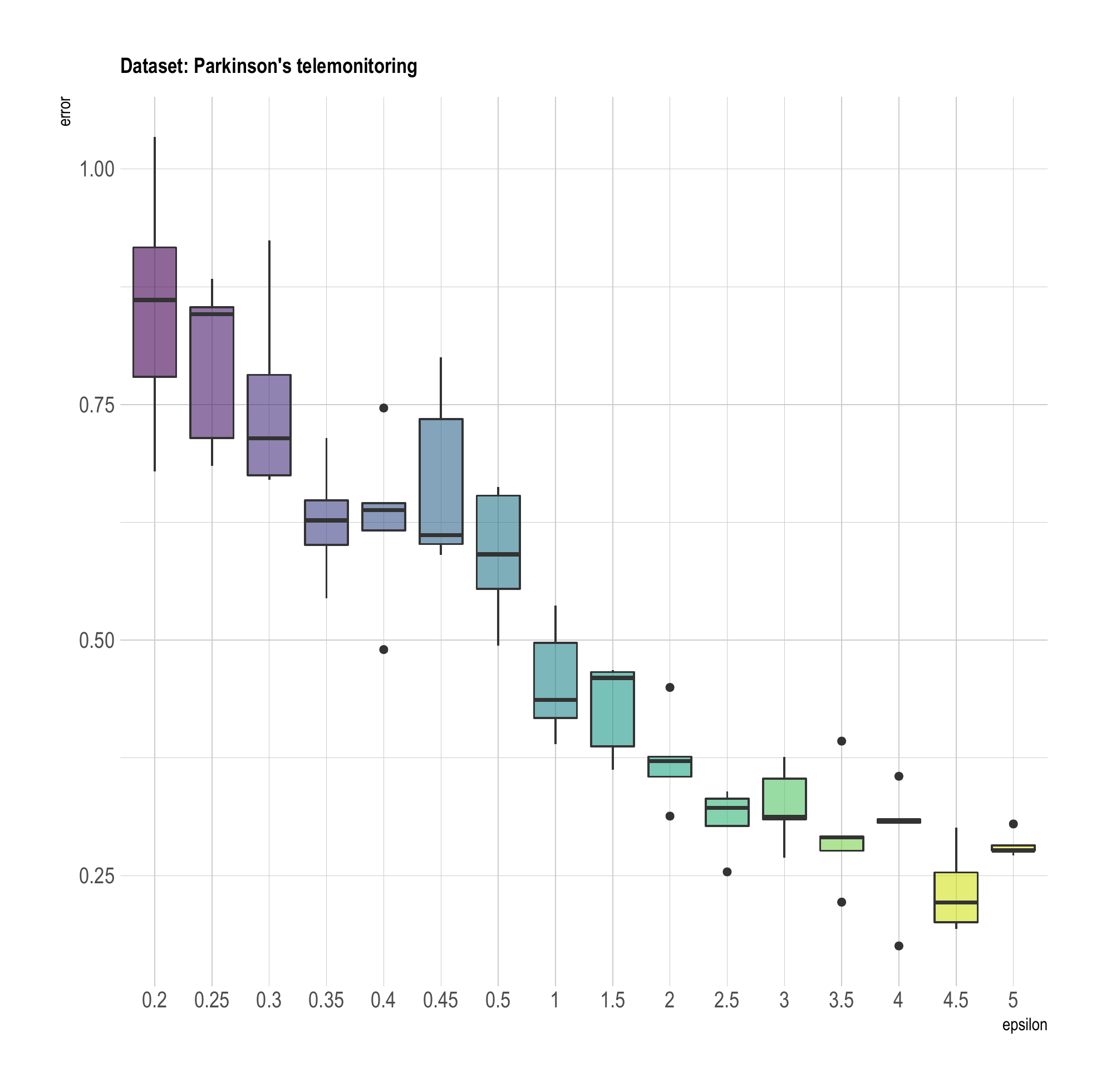}
   \caption{Privacy-utility tradeoff on the Parkinson's tele-monitoring dataset.} \label{fig:x_b}
\end{subfigure}
\hspace*{\fill}
\begin{subfigure}{0.3\textwidth}
   \includegraphics[width=\linewidth]{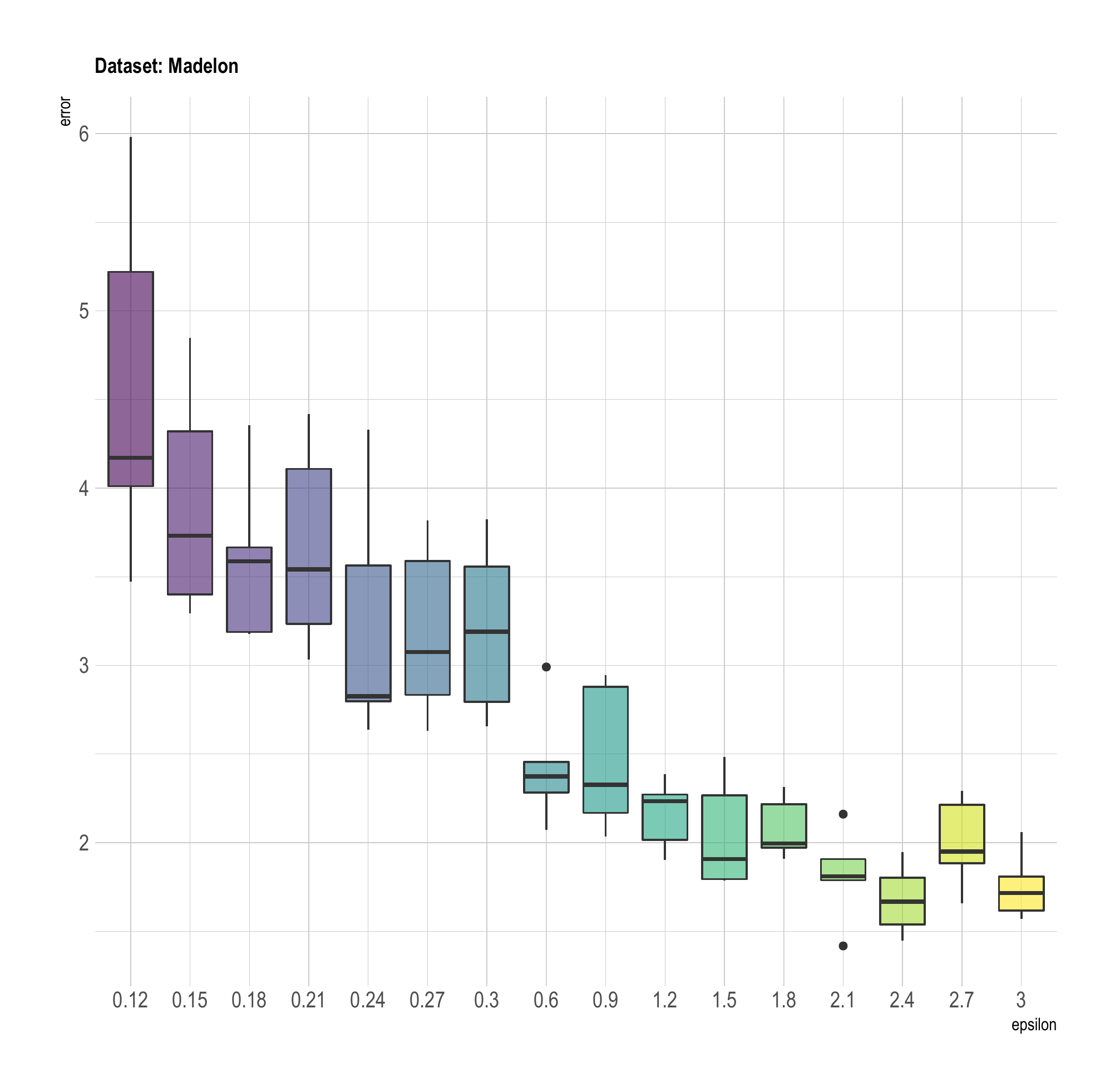}
   \caption{Privacy-utility tradeoff on the Madelon dataset.} \label{fig:x_c}
\end{subfigure}

% 2nd row: 3 more subfigures
\bigskip
\begin{subfigure}{0.3\textwidth}
   \includegraphics[width=\linewidth]{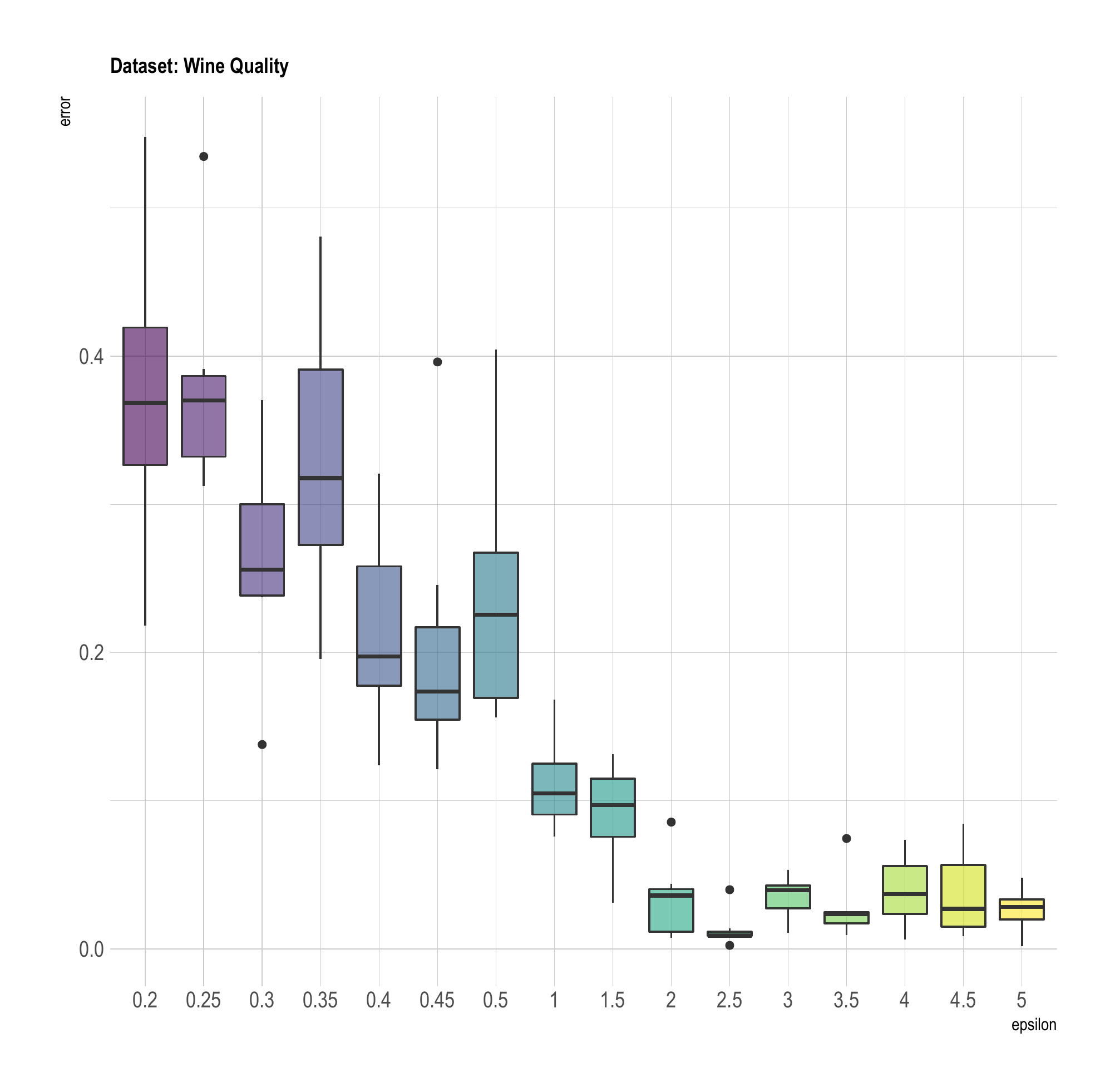}
   \caption{Privacy-utility tradeoff on the wine quality dataset.} \label{fig:x_d}
\end{subfigure}
\hspace*{\fill}
\begin{subfigure}{0.3\textwidth}
   \includegraphics[width=\linewidth]{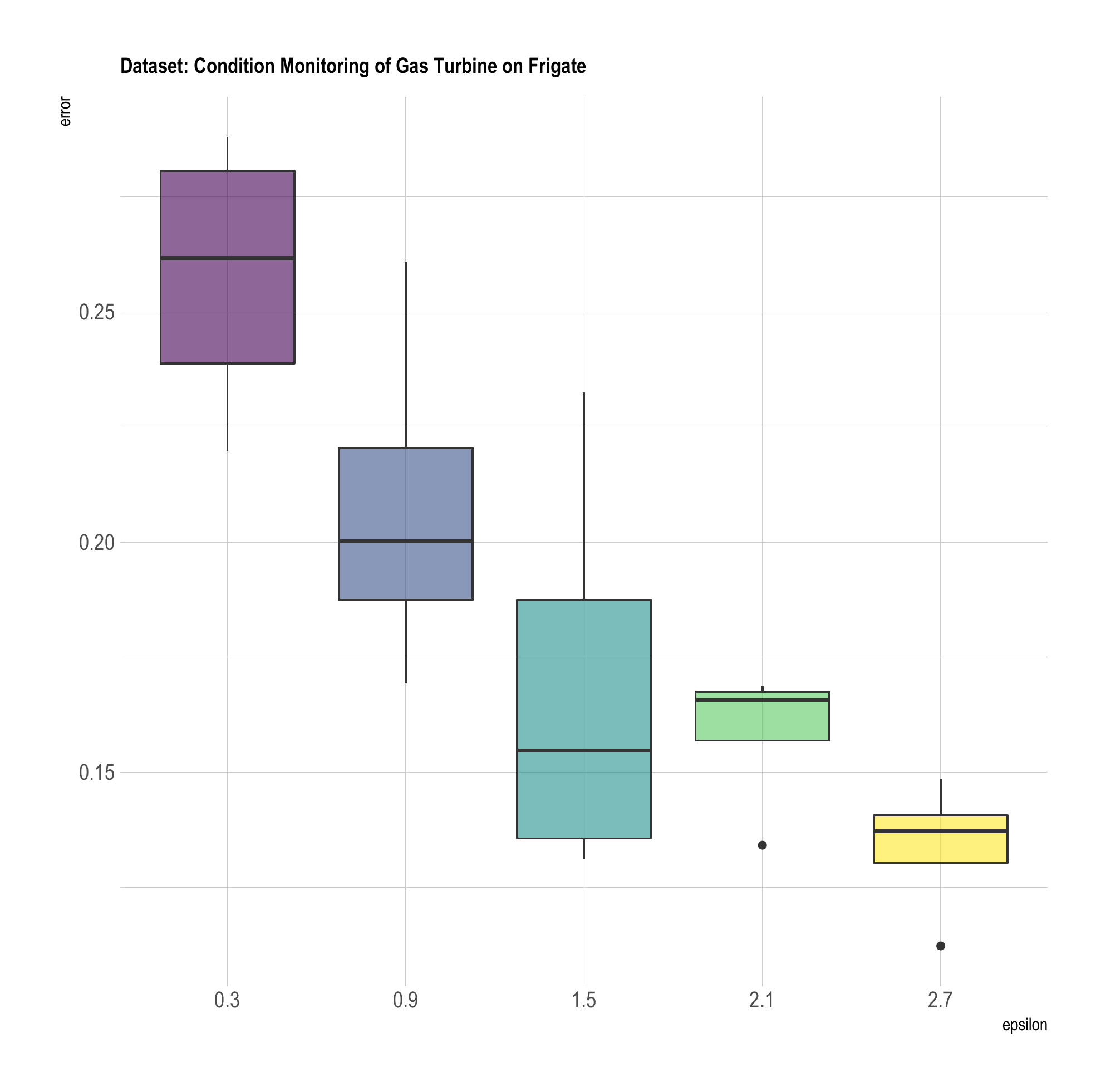}
   \caption{Privacy-utility tradeoff on the condition monitoring of gas turbine on a frigate dataset.} \label{fig:x_e}
\end{subfigure}
\hspace*{\fill}
\begin{subfigure}{0.3\textwidth}
   \includegraphics[width=\linewidth]{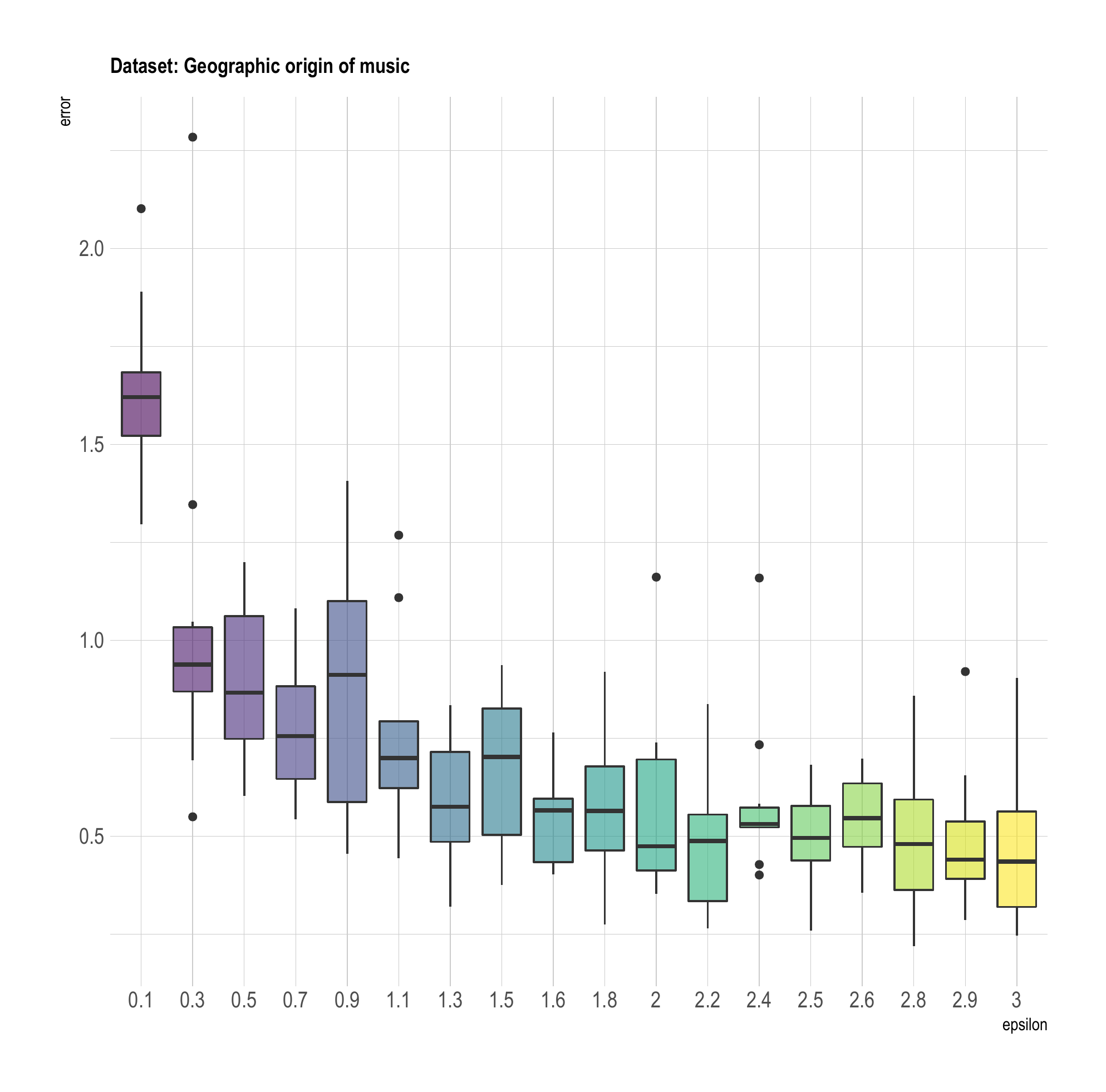}
   \caption{Privacy-utility tradeoff on the geographic origin of music dataset.} \label{fig:x_f}
\end{subfigure}

% 3rd row: just 2 subfigures, centered
\bigskip
\hspace*{\fill}
\begin{subfigure}{0.3\textwidth}
   \includegraphics[width=\linewidth]{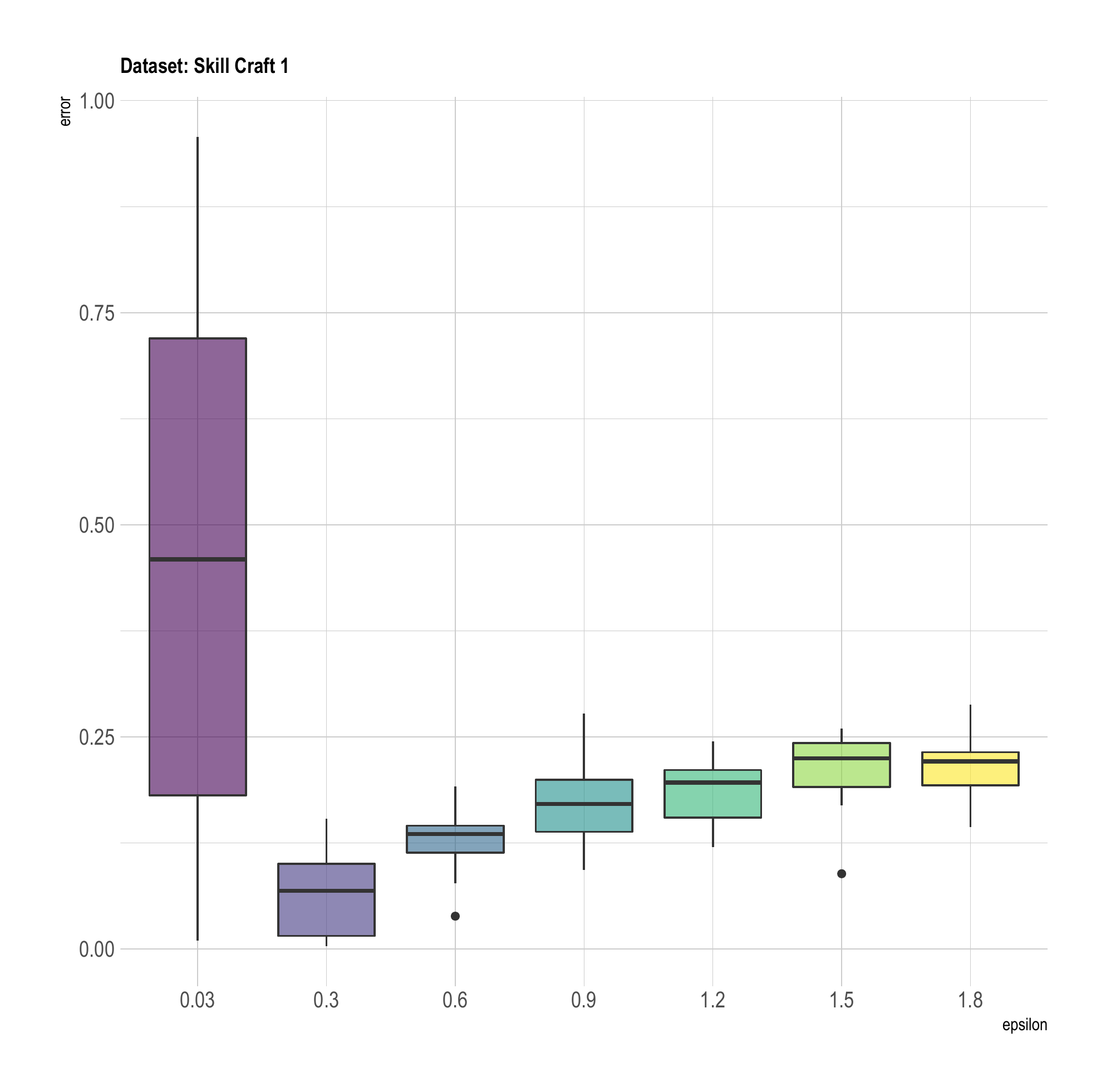}
   \caption{Skill Craft Dataset: Effect of $\#$ of repeated accesses to data being 3 in the no-partition estimator of $\overline{\Omega}^{dp}(X,Y)$ } \label{fig:x_g}
\end{subfigure}%
\hspace*{0.05\textwidth}%
\begin{subfigure}{0.3\textwidth}
   \includegraphics[width=\linewidth]{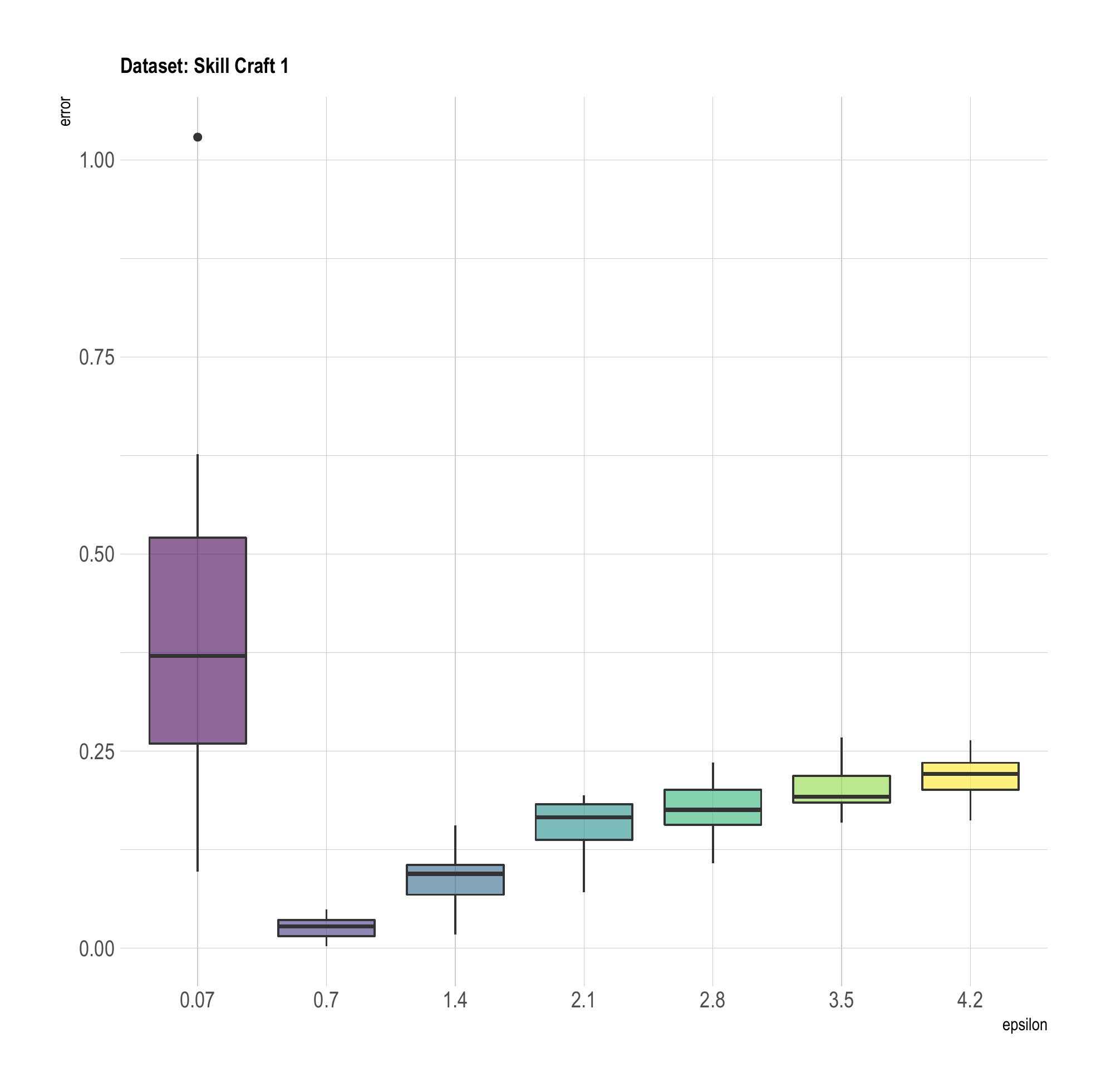}
   \caption{Skill Craft Dataset: Effect of $\#$ of repeated accesses to data being 7 in the no-partition estimator of $\overline{\Omega}^{dp}(X,Y)$. As can be seen the variance reduced im comparison to Figure (g) while the loss in privacy budget was more drastic due to the repeated data accesses. Thereby, we make a case for $\overline{\Omega}^{dp}_{\textrm{Disjt}}(X,Y)$ which has disjoint partitions and no repeated access to raw data. \label{fig:x_h}}
\end{subfigure}
\hspace*{\fill}

\caption{Privacy-utility trade-offs on several datasets is studied in (a) to (f) and the effect of partitioning or not is studied on the skill craft dataset in (g) to (h).}
\end{figure*}
The following equivalence was shown between distance covariance and HSIC. As we have the global sensitivity for HSIC and since we have the following equivalence, we can use this global sensitivity to privatize $\overline{\Omega}(\mathbf{X,X})$, which belongs to the one-party setting of privately releasing distance covariance.
 \subsection{Equivalence map between Distance Covariance and HSIC}
% Hilbert Schmidt Independence Criterion (HSIC) 
\begin{definition}
Bijective induced kernel 
\end{definition} 
Given sample data $\left\{x_{i}, i=1, \ldots, n\right\} .$ For any metric $d(\cdot, \cdot)$, we define its bijective induced kernel \cite{shen2019exact,shen2021exact} as
$$
\hat{k}_{d}\left(x_{i}, x_{j}\right)=\max _{s, t \in[n]}\left(d\left(x_{s}, x_{t}\right)\right)-d\left(x_{i}, x_{j}\right)
$$
For any kernel $k(\cdot, \cdot)$, we define the induced metric as
$$
\hat{d}_{k}\left(x_{i}, x_{j}\right)=\max _{s, t \in[n]}\left(k\left(x_{s}, x_{t}\right)\right)-k\left(x_{i}, x_{j}\right)
$$
The subscripts $s, t \in[n]$ is a shorthand for $s=1, \ldots, n$ and $t=1, \ldots, n$. 

Other alternative definitions for this bijection are also given in \cite{shen2021exact}. These are wieldy to use depending on the problem at hand.
$$
\begin{aligned}
&\hat{d}_{k}\left(x_{i}, x_{j}\right)=1-k\left(x_{i}, x_{j}\right) / \max _{s, t \in[n]}\left(k\left(x_{s}, x_{t}\right)\right) \\
&\hat{k}_{d}\left(x_{i}, x_{j}\right)=1-d\left(x_{i}, x_{j}\right) / \max _{s, t \in[n]}\left(d\left(x_{s}, x_{t}\right)\right)
\end{aligned}
$$
This is an equivalent definition up-to scaling by the maximum elements, and can be succinctly expressed in a matrix form:
$$
\begin{aligned}
&\hat{\mathbf{D}}_{\mathbf{K}}=\mathbf{J}-\mathbf{K} / \max (\mathbf{K}) \\
&\hat{\mathbf{K}}_{\mathbf{D}}=\mathbf{J}-\mathbf{D} / \max (\mathbf{D})
\end{aligned}
$$

\begin{theorem} \cite{shen2021exact}
Suppose distance covariance (DCOV) uses a given metric $d(\cdot, \cdot)$, and the Hilbert Schmidt independence criterion HSIC uses the bijective induced kernel $\hat{k}_{d}(\cdot, \cdot)$.
Given any sample data $(\mathbf{X}, \mathbf{Y})$, it holds that
$$
\begin{aligned}
&\operatorname{DCOV}_{n}(\mathbf{X}, \mathbf{Y})=\operatorname{HSIC}_{n}(\mathbf{X}, \mathbf{Y}) 
%&\operatorname{DCOV}_{n}(\mathbf{X}, \mathbf{Y})=\operatorname{HSIC}_{n}(\mathbf{X}, \mathbf{Y})+O\left(\frac{1}{n^{2}}\right)
\end{aligned}
$$
where the remainder term $O\left(\frac{1}{n^{2}}\right)$ is invariant to permutation.
\end{theorem}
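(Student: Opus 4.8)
The plan is to recast both statistics in matrix form and exploit the fact that double-centering annihilates any constant ``shift'' of the form $\mathbf{J}=\mathbf{1}\mathbf{1}^\top$, which is exactly the form in which the bijective induced kernel differs from the distance matrix. Write $\mathbf{D}_X,\mathbf{D}_Y$ for the $n\times n$ distance matrices with entries $d(x_i,x_j),d(y_i,y_j)$, let $\mathbf{1}$ be the all-ones vector, $\mathbf{J}=\mathbf{1}\mathbf{1}^\top$, and let $\mathbf{H}=\mathbf{I}-\tfrac1n\mathbf{J}$ be the double-centering matrix, so that $\mathbf{H}\mathbf{1}=\mathbf{0}$, $\mathbf{H}=\mathbf{H}^\top$, and $\mathbf{H}^2=\mathbf{H}$. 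In this notation the sample distance covariance can be written $\operatorname{DCOV}_n(\mathbf{X},\mathbf{Y})=c_n\operatorname{tr}(\mathbf{H}\mathbf{D}_X\mathbf{H}\mathbf{D}_Y)$ for the normalizing constant $c_n$ of the chosen convention, while $\operatorname{HSIC}_n(\mathbf{X},\mathbf{Y})=(n-1)^{-2}\operatorname{tr}(\mathbf{K}\mathbf{H}\mathbf{L}\mathbf{H})$ for kernel matrices $\mathbf{K},\mathbf{L}$.

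First I would substitute the bijective induced kernel. In matrix form $\hat{k}_d$ yields $\mathbf{K}=M_X\mathbf{J}-\mathbf{D}_X$ and $\mathbf{L}=M_Y\mathbf{J}-\mathbf{D}_Y$, where $M_X=\max_{s,t}d(x_s,x_t)$ and $M_Y=\max_{s,t}d(y_s,y_t)$. The key algebraic step is that $\mathbf{H}\mathbf{J}=\mathbf{J}\mathbf{H}=\mathbf{0}$ because $\mathbf{H}\mathbf{1}=\mathbf{0}$; hence $\mathbf{H}\mathbf{L}\mathbf{H}=M_Y\mathbf{H}\mathbf{J}\mathbf{H}-\mathbf{H}\mathbf{D}_Y\mathbf{H}=-\mathbf{H}\mathbf{D}_Y\mathbf{H}$, and every term of $\mathbf{K}\mathbf{H}\mathbf{L}\mathbf{H}$ carrying a factor $M_X\mathbf{J}$ meets an adjacent $\mathbf{H}$ and vanishes. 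Therefore $\operatorname{tr}(\mathbf{K}\mathbf{H}\mathbf{L}\mathbf{H})=\operatorname{tr}(\mathbf{D}_X\mathbf{H}\mathbf{D}_Y\mathbf{H})$, which by the cyclic property of the trace equals $\operatorname{tr}(\mathbf{H}\mathbf{D}_X\mathbf{H}\mathbf{D}_Y)$---the very bilinear form appearing in $\hat{\Omega}(\mathbf{X},\mathbf{Y})$ and hence in $\operatorname{DCOV}_n$. Thus the two unnormalized core quantities are exactly equal, the maxima $M_X,M_Y$ (and the nonzero diagonal $M$ they induce in the kernel, against the zero diagonal of the distance matrix) having been washed out by the centering.

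What remains is to reconcile the normalization conventions: $\operatorname{DCOV}_n$ and $\operatorname{HSIC}_n$ scale the common trace by different factors (for instance $c_n$ versus $(n-1)^{-2}$, together with the $U$-statistic correction implicit in the unbiased estimator $\hat{\Omega}$). I would expand $\operatorname{tr}(\mathbf{H}\mathbf{D}_X\mathbf{H}\mathbf{D}_Y)$, isolate its leading $\Theta(n^2)$ contribution, and show that once the common leading scaling is fixed the gap between the denominators and between $\mathbf{H}$-centering and $U$-centering contributes only at order $O(1/n^2)$. For permutation invariance of this remainder, relabeling the samples by a permutation $\pi$ acts as $\mathbf{D}_Y\mapsto\mathbf{P}\mathbf{D}_Y\mathbf{P}^\top$ for a permutation matrix $\mathbf{P}$; since $\mathbf{P}\mathbf{1}=\mathbf{1}$ gives $\mathbf{P}\mathbf{H}\mathbf{P}^\top=\mathbf{H}$, every trace is preserved and the maxima $M_X,M_Y$ are unchanged, so the correction depends only on permutation-symmetric functions of the matrix entries and is invariant to relabeling.

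The main obstacle I anticipate is not the centering identity, which is clean, but pinning the remainder down to exactly $O(1/n^2)$ rather than $O(1/n)$: this requires carefully matching the two estimators' diagonal handling and their normalizing denominators, and verifying that the discrepancy between $\mathbf{H}$-centering and $U$-centering produces only a second-order diagonal/row-sum correction. I would handle this by writing both estimators through an explicit $U$-centering operator and tracking the diagonal and single-index-sum corrections term by term, which is precisely where the permutation-invariance claim must be checked rather than merely asserted.
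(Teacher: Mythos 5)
First, a framing note: the paper never proves this statement --- Theorem~3 is imported verbatim from \cite{shen2021exact} and its ``proof'' is the citation (indeed the quoted statement is visibly garbled: the display shows exact equality while the text refers to a remainder term, so a ``$+\,O(1/n^2)$'' has clearly been dropped). Your proposal must therefore stand on its own, and it has a genuine gap exactly where the theorem has content. Your centering identity is correct: since $\mathbf{H}\mathbf{J}=\mathbf{J}\mathbf{H}=\mathbf{0}$, one gets $\operatorname{tr}(\mathbf{K}\mathbf{H}\mathbf{L}\mathbf{H})=\operatorname{tr}(\mathbf{D}_X\mathbf{H}\mathbf{D}_Y\mathbf{H})$, so the \emph{double-centered} (biased, V-statistic) dcov and hsic are exact positive multiples of one another, and with matching normalization they are equal with no remainder at all. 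But this proves a different, cleaner statement than the quoted one. The remainder in the theorem cannot come from the place you propose to extract it, namely the $(n-1)^{-2}$-versus-$n^{-2}$ normalization mismatch: that mismatch contributes $\bigl((n-1)^{-2}-n^{-2}\bigr)\operatorname{tr}(\mathbf{D}_X\mathbf{H}\mathbf{D}_Y\mathbf{H})$, and since the trace is generically $\Theta(n^2)$ this is a relative correction of order $1/n$, not $1/n^2$; worse, it is proportional to the statistic itself, which changes under permutation, so it cannot be ``invariant to permutation.'' Relatedly, your invariance argument is false as stated: under relabeling of $\mathbf{Y}$ alone, $\operatorname{tr}(\mathbf{H}\mathbf{D}_X\mathbf{H}\mathbf{P}\mathbf{D}_Y\mathbf{P}^{\top})\neq\operatorname{tr}(\mathbf{H}\mathbf{D}_X\mathbf{H}\mathbf{D}_Y)$ in general --- if ``every trace were preserved,'' permutation testing with these statistics would be vacuous. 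Only traces involving one dataset at a time are preserved.

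The actual content of the theorem lives in the U-centered (unbiased) estimators --- the version this paper itself uses in its Equation~(1) --- and there your key step breaks, because U-centering does \emph{not} annihilate constant matrices. Writing $\tilde{A}$ for the U-centered distance matrix and applying the same U-centering to the induced kernel $\mathbf{K}=M_X\mathbf{J}-\mathbf{D}_X$, a direct computation gives, for $i\neq j$,
\begin{equation*}
\tilde{K}_{ij}=-\frac{M_X}{n-1}-\tilde{A}_{ij},
\end{equation*}
i.e.\ the constant shift survives with coefficient $-1/(n-1)$ rather than vanishing. Since off-diagonal row sums of U-centered matrices are zero, the cross terms cancel when forming the estimator, and
\begin{equation*}
\frac{1}{n(n-3)}\sum_{i\neq j}\tilde{K}_{ij}\tilde{L}_{ij}
=\frac{1}{n(n-3)}\sum_{i\neq j}\tilde{A}_{ij}\tilde{B}_{ij}
+\frac{M_XM_Y}{(n-1)(n-3)}.
\end{equation*}
That last term is the remainder in the theorem: it is $O(1/n^2)$ for bounded data, and it is invariant to permutation precisely because it depends only on the within-sample maxima $M_X$ and $M_Y$, which are unchanged when the pairing between the $\mathbf{X}$- and $\mathbf{Y}$-samples is permuted. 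Your proposal defers exactly this computation (``tracking the diagonal and single-index-sum corrections term by term''), and your $\mathbf{H}$-centering identity does not transfer to it; so the missing piece is not a technicality but the entire mechanism producing both the $O(1/n^2)$ rate and the permutation invariance.
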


\section{Computational complexity}
Fast estimators of distance correlation requires $\mathcal{O}(nlogn)$ \cite{chaudhuri2019fast,huo2016fast} computational complexity for univariate and $\mathcal{O}(nKlog n)$ complexity \cite{huang2017statistically} for multivariate settings with $\mathcal{O}(\max(n, K))$ memory, where
$K$ is the number of random projections required as part of the estimation. In addition to being differentiable and easily computable with a closed-form, it requires no other tuning of parameters and is self-contained unlike HSIC or other dependency measures such as Maximum Mean Discrepancy (MMD) or Kernel Traget Alignment (KTA) that depend on a choice of separate kernels for features as well as labels along with their respective tuning parameters. 

%\section{Connections to other measures}
%Distance correlation has been shown to be a simpler special case of other recent popular measures of dependence such as Hilbert-Schmidt Independence Criterion (HSIC), Maximum Mean Discrepancy (MMD) and Kernelized Mutual Information (KMI) that have been extensively studied and used in the machine learning and statistics community \cite{sejdinovic2013equivalence,tonde2016supervised}%,sejdinovic2012hypothesis,chang2013canonical}.
\section{Experiments}

In this section, we first introduce the datasets, experimental methodology and metrics of evaluation. We study the performance of our method in privately estimating distance correlation across a variety of datasets. We study the privacy-utility trade-off curves for varying values of privacy parameter $\epsilon$. We study the effect of $\epsilon$ on the $l_1$ error of estimation. 

\subsection{Datasets}
\begin{enumerate}
    %\item Facebook Comment Volume Dataset Data Set
    %40949 by 54
    \item \textbf{Condition Based Maintenance of Naval Propulsion Plants} This dataset of 11934 samples has been generated from a sophisticated simulator of a Gas Turbines (GT), mounted on a Frigate. The goal is to predict performance decay of this equipment based on 15 features.
    \item \textbf{Parkinsons Telemonitoring Data Set} This dataset of 5875 instances and 24 features was created in collaboration with 10 medical centers in the US and Intel Corporation who developed the tele-monitoring device to record the speech signals. The goal is to predict the clinician's Parkinson's disease symptom score on the UPDRS scale.
   
    \item \textbf{Wine Quality Data} This is a dataset of 4898 samples and 11 features. The goal is to predict the red-wine quality using the input features. Due to privacy and logistical issues, only some variables are made available.
    \item \textbf{Boston Housing} This small dataset of 506 samples and 18 features contains information collected by the U.S Census Service concerning housing in the area of Boston Mass. It has classically been used extensively throughout the literature to benchmark algorithms.
    %\item Bank Marketing Data Set
    %45211 by 17
    
    %606 by 6400
    %\item Arcene 900 by 10000
    %\item Gisette 13500 by 5000
    \item \textbf{Madelon} This dataset is one of five datasets used in the NIPS 2003 feature selection challenge and has 4400 samples of 500 features. It is an artificial dataset containing a number of distractor feature called 'probes' having no predictive power. The order of the features and patterns were randomized. Only 20 out of the 500 are real features with predictive value and 480 are the distracting features/probes.
    
    \item \textbf{Geographic origin of music} This dataset was built from a personal collection of 1059 tracks covering 33 countries/areas. The goal is to predict the geographical origin of each music file based on the audio features extracted from each wave file. This dataset has 3395 samples and 19 features.
    
    \item \textbf{Skill craft 1 Master Table} This is a gaming related dataset where the goal is to predict the league (position) of any player of skill craft 1 based on features that measure his interactions with the game. This dataset has 1059 samples and 116 features.
    
    \item \textbf{Seoul bike sharing demand} The goal is to predict the bike count required at each hour in order to plan and maintain a stable supply of rental bikes.This dataset has 8760 samples and 13 features. The results on this dataset are provided in Appendix E.
\end{enumerate}
\vspace{-6mm}
\section{Methodology} We split the feature sets for each of these datasets into two. One to represent $\mathbf{X}$ and other to represent $\mathbf{Y}$. Note that distance covariance and distance correlation can be computed between $\mathbf{X}$ and $\mathbf{Y}$ that can potentially lie in different dimensions as long as their sample sizes match up to be the same. The reason for this is that the estimators for distance covariance and distance correlation solely depend on the distance matrices of these samples. Since distance matrices are square and symmetric and since the sample sizes match up, thereby the operations commute fine even if the original data samples lied in different dimensions. We then apply our proposed protocol for computing the private distance correlations as shown in Figure 2. We use both the proposed estimators of $\overline{\Omega}^{dp}(\mathbf{X,Y})$ or $\Omega^{dp}_{\textrm{Disjt}}(\mathbf{X,Y})$ along with $\overline{\Omega}^{dp}(\mathbf{X,X})$ on each of our considered datasets. We compute the $l_1$ error between the true sample distance correlation (non-private) and the proposed private estimators for each choice of privacy parameter $\epsilon$. We plot these $\epsilon$ Vs. utility curves for each ouf the considered dataset. In order to understand the sense of variability around these utility estimates, we repeat the experiment multiple times and report the variances observed in these plots from Figures 3 (a) to 3(f). In most cases, we observe a substantially good utility in measuring sample distance correlation at reasonably low levels of $\epsilon$. Furthermore, we studied the effect of using the proposed disjoint private estimator $\Omega^{dp}_{\textrm{Disjt}}(\mathbf{X,Y})$  Vs. using the repeated data access private estimator $\overline{\Omega}^{dp}(\mathbf{X,Y})$ in the following subsection. The privacy budgets in all the plots have been calibrated according to the rules of their corresponding composition properties for differential privacy\cite{dwork2014algorithmic}. The effect of number of partitions is also studied below. 
\vspace{-3.5mm}
\subsection{Effect of disjoint Vs. repeated partitions}
We study the effect of using the proposed estimator on disjoint partitions $\overline{\Omega}^{dp}_{\textrm{Disjt}}(X,Y)$ as opposed to the entire dataset being accessed for each $k$ as in the proposed estimator of $\overline{\Omega}^{dp}(X,Y)$. Using disjoint partitions is beneficial to conserving the privacy-budget via parallel composition property while on the other side it reduces the number of samples the estimator for any $k$ would have access to. This potentially has an effect on the utility or variance of the estimator. This can been empirically observed as shown in the variances of Figure \ref{fig:nopart} mostly being lesser than that of Figure \ref{fig:disjt} .

\begin{figure}
    \centering
    \includegraphics[scale=0.18]{images/skillCraft.pdf}
    \caption{$\overline{\Omega}^{dp}(X,Y)$ - No partitions}
    \label{fig:nopart}
    \includegraphics[scale=0.18]{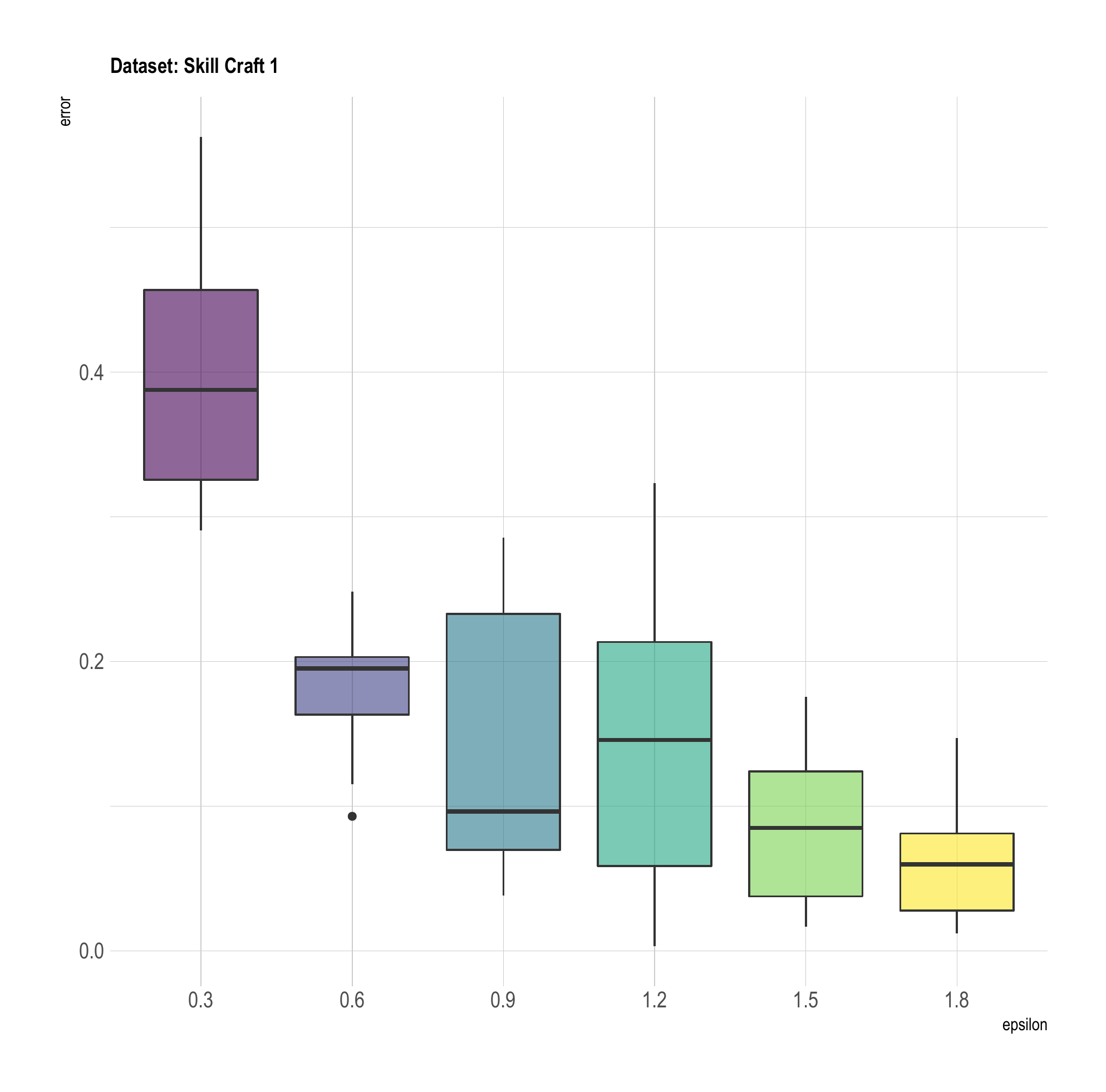}
    \caption{$\overline{\Omega}^{dp}_{\textrm{Disjt}}(X,Y)$ - Disjoint partitions }
    \label{fig:disjt}
\end{figure}
\vspace{-5mm}
\subsection{Effect of number of partitions} The choice of number of partitions has a trade-off between a.) reducing the variance of the estimates and b.) losing the privacy-budget when it comes to the estimator with repeat accesses to data. This is as opposed to the estimator with disjoint partitions, where the trade-off is between reducing the variance of the estimator and loss of utility for the estimator, due to smaller set of samples being accessible within any partition.

 %   \end{multicols}
%\begin{multicols}{2}
%   \includegraphics[width=10cm,
%  height=8cm,
%%  keepaspectratio]{}\par
%    \includegraphics[width=10cm,
%  height=8cm,
%  keepaspectratio]{images/dataBanking_3-end.pdf}\par
%\end{multicols}
%\caption{caption here}
\vspace{-4mm}
\section{Conclusion} We provided a method to privately compute distance correlation between data hosted on two different parties. We provided theoretical and empirical justification of our method. As measurement of nonlinear correlations is a fundamental topic of interest to theoreticians and practitioners alike, we thereby hope that this work brings this problem of private nonlinear correlation estimation to a further mainstream consideration.
\vspace{-7mm}
\section{Future work} \vspace{-3mm}Distance correlation is a special case of a broader concept of energy statistics \cite{szekely2013energy,rizzo2016energy,szekely2017energy}. Thereby solutions for its private estimation open up a door for investigating multi-party private solutions for downstream problems that depend on distance correlation such as multi-party private independence testing, multi-party private feature screening and private multi-party private causal inference.

\onecolumn
\appendix
\section{Proof of decomposition theorem}
\label{appdx:Decomp}
\small
 \begin{proof}
 
 \begin{align}
 \overline{\Omega}^{dp}(X+N_X,Y) &= \sum_{n=1}^{K}\frac{C_pC_q \Omega_n(u_{K}^{T}X + N_X, v_{K}Y)}{K}\nonumber =\sum_{n=1}^{K} \frac{C_pC_q}{K} \Bigl\{ \frac{1}{n(n-3)}\sum_{i \neq j}\tilde{a}_{ij}\tilde{b}_{ij} -\frac{2}{n(n-2)(n-3)}\sum \tilde{a_{i\cdot}}\tilde{b_{i\cdot}} +\\& \frac{a_{..}b_{..}}{n(n-1)(n-2)(n-3)}\Bigr\}\nonumber =\sum_{n=1}^{K} \frac{C_pC_q}{K} \biggl\{\nonumber \frac{1}{n(n-3)}\sum_{i\neq j}| u_K^T(X_i-X_j) + N_i^X - N_j^X | |V_K^T(Y_i-Y_j)|\\\nonumber&- \frac{2}{n(n-2)(n-3)}\sum_{i=1}^{n} (\sum_{j=1}^n | u_K^T(X_i-X_j) + N_i^X - N_j^X | \sum_{j=1}^n  |V_K^T(Y_i-Y_j)|) \nonumber\\ &+\frac{(\sum_{i,j}^{n}| u_K^T(X_i-X_j) + N_i^X - N_j^X | \sum_{k,l}^{n}|V_K^T(Y_k-Y_l)| )}{n(n-1)(n-2)(n-3)}\biggr\}\nonumber  \\ &\leq \sum_{n=1}^{K} \frac{C_pC_q}{K} \Bigl\{ \frac{1}{n(n-3)}\sum_{i\neq j}| u_K^T(X_i-X_j)||V_K^T(Y_i-Y_j)| + \frac{1}{n(n-3)}\sum_{i\neq j} |(N_i^X-N_j^X)||V_K^T(Y_i-Y_j)|\nonumber\\&-\frac{2}{n(n-2)(n-3)}\sum_{i=1}^n\biggl(\sum_{j=1}^n | u_K^T(X_i-X_j)|\sum_{j=1}^n V_K^T(Y_i-Y_j)|\biggr)\nonumber \\&+\frac{2}{n(n-2)(n-3)}\sum_{i=1}^n\biggl(\sum_{j=1}^n | N_i^X-N_j^X|\sum_{j=1}^n V_K^T(Y_i-Y_j)| \biggr) \nonumber \\ &+ \frac{1}{n(n-1)(n-2)(n-3)}\biggl(\sum_{i,j}^{n}| u_K^T(X_i-X_j)|\sum_{k,l}^{n}|V_K^T(Y_k-Y_l)|\nonumber +\sum_{i,j}^{n} |N_i^X - N_j^X |\sum_{k,l}^{n}|V_K^T(Y_k-Y_l)| \biggr)\Bigr\} \nonumber \\ & = \sum_{n=1}^{K} \frac{C_pC_q}{K} \Bigl\{ \frac{1}{n(n-3)}\sum_{i\neq j}| u_K^T(X_i-X_j)||V_K^T(Y_i-Y_j)|
-\frac{2}{n(n-2)(n-3)}\sum_{i=1}^n\biggl(\sum_{j=1}^n | u_K^T(X_i-X_j)|\sum_{j=1}^n V_K^T(Y_i-Y_j)|\biggr)\nonumber \\&+\frac{1}{n(n-1)(n-2)(n-3)}\biggl(\sum_{i,j}^{n}| u_K^T(X_i-X_j)|\sum_{k,l}^{n}|V_K^T(Y_k-Y_l)|\biggr)\nonumber \\&-\frac{2}{n(n-2)(n-3)}\sum_{i=1}^n\biggl(\sum_{j=1}^n | N_i^X-N_j^X|\sum_{j=1}^n V_K^T(Y_i-Y_j)| \biggr)+ \frac{1}{n(n-3)}\sum_{i\neq j} |(N_i^X-N_j^X)||V_K^T(Y_i-Y_j)|\nonumber \\ &+ \frac{1}{n(n-1)(n-2)(n-3)}\biggl(\sum_{i,j}^{n} |N_i^X - N_j^X |\sum_{k,l}^{n}|V_K^T(Y_k-Y_l)| \biggr)\Bigr\}\nonumber \\& + \sum_{n=1}^K \frac{4C_pC_q}{Kn(n-2)(n-3)}\sum_{i=1}^n\biggl(\sum_{j=1}^n | N_i^X-N_j^X|\sum_{j=1}^n V_K^T(Y_i-Y_j)| \biggr)\nonumber \\& = \overline{\Omega}(X,Y) + \overline{\Omega}(N^X,Y) + \sum_{n=1}^K \frac{4C_pC_q}{Kn(n-2)(n-3)}\sum_{i=1}^n\biggl(\sum_{j=1}^n | N_i^X-N_j^X|\sum_{j=1}^n V_K^T(Y_i-Y_j)| \biggr)
\end{align}\end{proof}
We have used the inequalities $|a+b|\leq |a|+|b|$ and $|a+b|\geq |a|-|b|$ for two vectors $a$ and $b$ where $|a|$ refers to the modulus of vector $a$. For any fixed $n$, $\hat{\Omega}(X,Y)$ is denoted by $\Omega_n(X,Y)$. Here $\overline{\Omega}(N^X,Y)$ is defined by 
\begin{lemma}

For $a,b$ and $c$ $\in \mathbb{R}^+$, given that $p\left(a_{i}<c\right)>b$ for $i=1,2, \ldots, n$ we will show that $P\left(\sum a_{i}<n c\right) \geqslant 1-n(1-b)$.\\
\end{lemma}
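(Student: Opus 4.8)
The plan is to prove this by a simple union-bound (Boole's inequality) argument that exploits a one-sided set containment between the joint event and the event on the sum. First I would record the deterministic implication: if $a_i < c$ holds simultaneously for every $i = 1, \ldots, n$, then summing these $n$ inequalities gives $\sum_{i=1}^n a_i < nc$. In terms of events this reads
$$\bigcap_{i=1}^n \{a_i < c\} \subseteq \left\{\sum_{i=1}^n a_i < nc\right\},$$
so by monotonicity of probability $P\left(\sum_{i=1}^n a_i < nc\right) \geq P\left(\bigcap_{i=1}^n \{a_i < c\}\right)$. The containment runs only one way, since the sum can dip below $nc$ even when some individual $a_i$ exceeds $c$; this is exactly why the conclusion is a lower bound rather than an equality.

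Next I would bound the joint probability from below via the complementary union bound. Passing to complements, $P\left(\bigcap_{i=1}^n \{a_i < c\}\right) = 1 - P\left(\bigcup_{i=1}^n \{a_i \geq c\}\right)$, and Boole's inequality gives $P\left(\bigcup_{i=1}^n \{a_i \geq c\}\right) \leq \sum_{i=1}^n P(a_i \geq c)$. The hypothesis $P(a_i < c) > b$ is equivalent to $P(a_i \geq c) < 1 - b$ for each $i$, whence $\sum_{i=1}^n P(a_i \geq c) < n(1-b)$.

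Combining the two steps yields
$$P\left(\sum_{i=1}^n a_i < nc\right) \geq 1 - \sum_{i=1}^n P(a_i \geq c) > 1 - n(1-b),$$
which is the claimed bound (in fact with a strict inequality, the stated weak inequality following at once). There is no genuine obstacle here: the only points needing slight care are fixing the correct direction of the set containment, and recognizing that Boole's inequality must be applied to the complementary \textbf{bad} events $\{a_i \geq c\}$ rather than directly to $\{a_i < c\}$. I would also note that the lemma requires no independence among the $a_i$, which is precisely what makes it the right tool for assembling the two marginal tail bounds $\alpha_1, \alpha_2$ into the joint concentration statement of Theorem~2.
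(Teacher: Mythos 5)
Your proof is correct and follows essentially the same route as the paper's: the containment $\bigcap_i \{a_i < c\} \subseteq \{\sum_i a_i < nc\}$, passage to complements, and Boole's inequality applied to the bad events $\{a_i \geq c\}$, then the hypothesis $P(a_i \geq c) < 1-b$. Your write-up is in fact slightly more careful than the paper's, noting the strictness of the final inequality and the absence of any independence assumption, but the underlying argument is identical.
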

\begin{proof}

$p\left(\sum a_{i}<n c\right) \geqslant P\left(a_{1}<c \cap a_{2}<c \ldots .\right.$
$\left.\cap a_{n}<c\right)$
$=1-P\left(a_{1} \geqslant c \cup a_{2} \geqslant c \cup a_{3} \geqslant C\right.$
$\left. \ldots \quad \cup a_{n} \geqslant c\right)$
$\geqslant 1-p\left(a_{1} \geqslant c\right)-p\left(a_{2} \geqslant c\right)$
$\ldots-p\left(a_{n}> c\right)$
$\geq 1-n(1-b)$
\end{proof}

 \subsection{Proof for the bound on error term}
  \begin{proof}
Now as the $N_i^X$'s are drawn from a mean zero Gaussian distribution where $\sigma^2 \geq w_2(P) \frac{\sqrt{2 ln(1/2\delta) + \epsilon}}{\epsilon}$. Assuming $N_i^X$ and $N_l^X$ are sampled independently for any $l\neq i$, if $s_l = N_i^X-N_l^X$, the distribution of $s_l$ terms will be therefore $N(0,2\sigma^2)$. Now let $\tau_i = \sum_{l=1 }^n |N_i^X-N_l^X|=\sum_{l=1, l\neq i}^n |N_i^X-N_l^X|=\sum_{l=1, l\neq i}^n |s_l|$ It is critical to note that the $s_l$'s are not independent anymore. Therefore, we consider $\tau_i$'s instead as for the expectation $\mathbb{E}(\tau_i)$, the correlation between $s_l$'s would not matter anymore. Now, $\mathbb{E}(\tau_i) = (n-1) \mathbb{E}(s_1)$. This brings us into the regime of half-normal or folded normal distributions. If $X \sim N(0,\sigma^2), Y=|X|$, then we know that $\mathbb{E}[Y] = \sigma \sqrt{\frac{2}{\pi}}$ and $Var(Y) =\sigma^2(1-\frac{2}{\pi})$. In our case $\mathbb{E}[s_1] = \sigma \sqrt{2}\times \sqrt{\frac{2}{\pi}} = \frac{2 \sigma}{\sqrt{\pi}}$ and  $Var(s_1) = 2\sigma^2 (1- \frac{2}{\pi})$. We now need a concentration bound for $\sum_{l=1, l\neq i}^{n} |N_i^X-N_l^X|$.
\small
\begin{align}
&\sum_{l=1,l\neq i}^{n}|N_i^X-N_l^X| \leq \nonumber   (n-1)|N_i^X| + \sum_{l=1,l\neq i}^{n}|N_l^X|  \nonumber  
\end{align}
The moment generating function of a standard Gaussian random variable $X$ is
%$$\overline{\Omega}(Y,N_x) = \sum_k Tr[(N_x)_k^TL_Y(N_x)_{k}] \leq k \exp\left(\frac{Tr(L_Y}{2||L_Y||_{op}}-\frac{\epsilon}{4\sigma^2||L_Y||_{op}}\right)$$
$\mathbb{E}[e^{tX}] = e^\frac{\sigma^2 t^2}{2}$ for all $t \in \mathbb{R}$. We would want to find a concentration bound on $|(n-1)N_i^X| + \sum_{l=1,l\neq i}^{l=n} |N_i^X|$. We denote $(n-1)N_i^X$ as $N_i^\prime$. Note that $(n-1)N_i^X$ and $N_l^X$ where $l\neq i$ are all i.i.d. The required concentration bound can be written in terms of the moment generating function as
\begin{center}
   
\small
\begin{align}
    P\left\{ \frac{1}{n}\left(|(n-1)N_i^X| + \sum_{l=1;l\neq i}^{l=n} |N_i^X|\right )\geq t\right\}\nonumber   &= \nonumber
    P\left\{ \exp{\left(\frac{\lambda}{n}(|(n-1)N_i^X| + \sum_{l\neq i}^{l=n} |N_i^X|)\right)\geq e^{\lambda t}}\right\} \\ \nonumber
    &\leq e^{-\lambda t} \mathbb{E}\left[\exp{\left(\frac{\lambda}{n}(|(n-1)N_i^X| + \sum_{l=1,l\neq i}^{l=n} |N_i^X|)\right)}\right] \\\nonumber  
    &= e^{-\lambda t} \mathbb{E}\left[\left(\prod_{k=1;k\neq i}^{n}e^{\frac{\lambda}{n}|N_k^X|}\right)e^{\frac{\lambda}{n} |(n-1)N_i^X|}\right]
    \\\nonumber  
    &=e^{-\lambda t} \prod_{k=1;k\neq i}^{n}\mathbb{E} \left[e^{\frac{\lambda}{n}|N_k^{X}|}\right]\mathbb{E} \left[e^{\frac{\lambda}{n} |(n-1)N_i^{X}|}\right] \text{(as they are independent)} \label{indepEqn}
\end{align}
\end{center}
Now based on the properties of moment generating functions we have,\begin{align}
\mathbb{E}\left[e^{t|X_k|}\right] &\leq \mathbb{E}\left[e^{t|X_k|} + e^{-t|X_k|}\right]\\ \nonumber
&=\mathbb{E}[e^{tX_k} + e^{-tX_k}]\\\nonumber
&=\mathbb{E}[e^{tX_k}] + \mathbb{E}[e^{tX_k}]\\\nonumber
&=2\mathbb{E}[e^{tX_k}] \\\nonumber
&=2e^{\frac{\sigma^2t^2}{2}} \text{as m.g.f for Gaussians is } m_X(t)=e^{\mu t + \frac{\sigma^2t^2}{2}}
\end{align}
 
We have that $\mathbb{E}[e^{\frac{\lambda}{n}|N_{k}^{X}|}] \leq 2e^{\frac{\lambda^2\sigma^2}{2n^2}}$ and $\mathbb{E}[e^{\frac{\lambda}{n}|(n-1)N_i^{X}|}] \leq 2e^{\frac{\lambda ^2}{2n^2}(n-1)\sigma ^2}$.

From eqn. \ref{indepEqn} the r.h.s of the inequality is $e^{-\lambda t}2^{n}e^\frac{\lambda^2 \sigma^2(n-1)}{n^2}$

Therefore $$\mathbb{P}\{expression \geq nt \} \leq e^{-\lambda t}2^{n}e^\frac{\lambda^2 \sigma^2(n-1)}{n^2}$$

As we want to minimize the upper bound,
$\therefore \frac{\partial}{\partial l}(-\lambda t + \frac{\lambda^2 \sigma^2(n-1)}{n^2} = 0)$
or, $-t + 2\lambda \frac{(n-1)\sigma^2}{n^2} = 0$. So we have, $\lambda = \frac{n^2 t}{2\sigma^2(n-1)} $.

Therefore, $ \mathbb{P}\{expression \geq nt \}  \leq e^{-\frac{n^2t^2}{4\sigma^2(n-1)}}2^n$ and in the case, where the probability needs to kept under a chosen level $\alpha$, then we have $ \left(\frac{2}{e^{\frac{nt^2}{4\sigma^2(n-1)}}}\right)^n < \alpha $. Upon applying $\log$ on both sides we have
$ n \log\left(\frac{2}{e^{\frac{nt^2}{4\sigma^2(n-1)}}}\right)<\log(\alpha)$ and upon rearranging the terms we have,
$$t > 2\sigma\sqrt{(\frac{n-1}{n})(\log(2)-\frac{\log(\alpha)}{n})}$$

Let us denote $expression = |(n-1)N_i^{X}| + \sum_{l=1,l\neq i}^{l=n}| N_l^X |$ for brevity. We know that $$P(expression \geq nt) \leq e^{-\frac{n^2t^2}{4\sigma^2(n-1)}} \times 2^n$$ Note that $\sum_{l=1}^{n}|N_i^X-N_l^X| \leq expression$ . Therefore we have, $$P(\sum_{l=1}^{n}|N_i^x-N_l^X| \geq nt) \leq e^{-\frac{n^2t^2}{4\sigma^2(n-1)}} \times 2^n$$ We now work with the next expression $\sum_{l=1}^n |V_k^t(Y_i-Y_l) |$. Note that each entry of $V_k^T$ is sampled independently from a Gaussian as part of ensuring differential privacy. As $Y_i$'s are one-hot encoded, therefore $V_k^TY_i$ and $V_k^TY_l$ are two independently sampled entries from the vector $V_k^T$. Assuming, $V_k^tY_i$ are from distribution $N(0,\sigma_1^2)$ and $N^X_i$s are from distribution $N(0,\sigma_2^2)$, we can write that

$$P(\sum_{l=1}^n|V_k^T(Y_i-Y_l)|\geq nt_1))\leq e^{\frac{-n^2t_1^2}{4(n-1)\sigma_1^2}}\times 2^n$$ and $$P(\sum_{l=1}^n|N^X_i-N^X_l)|\geq nt_2))\leq e^{\frac{-n^2t_2^2}{4(n-1)\sigma_2^2}}\times 2^n$$. Denoting the rhs of these inequalities as $\alpha_1$ and $\alpha_2$ respectively, we can write $$P(\sum_{l=1}^n|V_k^T(Y_i-Y_l)|< nt_1))\geq  1-\alpha_1$$ and $$P(\sum_{l=1}^n|N^X_i-N^X_l|< nt_2))\geq 1-\alpha_2$$ 
Note that $P(\sum_{l=1}^n|V_k^T(Y_i-Y_l)|\sum_{l=1}^n|N^X_i-N^X_l|<n^2t_1t_2)\geq P(\sum_{l=1}^n|V_k^T(Y_i-Y_l)|<nt_1)P(\sum_{l=1}^n|N^X_i-N^X_l|<nt_2)$ since these two events are clearly independent and for independent $a,b$, $P(ab<c^2)>P(a<c)P(b<c)$ holds.

Therefore, $P(\sum_{l=1}^n|V_k^T(Y_i-Y_l)|\sum_{l=1}^n|N^X_i-N^X_l|<n^2t_1t_2)\geq (1-\alpha_1)(1-\alpha_2)$.
Now using lemma 1, we get that $P(\sum_{i=1}^n(\sum_{l=1}^n|V_k^T(Y_i-Y_l)|\sum_{l=1}^n|N^X_i-N^X_l|<n^3t_1t_2))\geq 1-n(\alpha_1+\alpha_2-\alpha_1\alpha_2)$. So for any constant $C$, we can write $P(C\sum_{i=1}^n(\sum_{l=1}^n|V_k^T(Y_i-Y_l)|\sum_{l=1}^n|N^X_i-N^X_l|<Cn^3t_1t_2))\geq 1-n(\alpha_1+\alpha_2-\alpha_1\alpha_2)$. Therefore, for each $n$, $ \frac{4C_pC_q}{Kn(n-2)(n-3)}\sum_{i=1}^n\biggl(\sum_{j=1}^n | N_i^X-N_j^X|\sum_{j=1}^n V_K^T(Y_i-Y_j)|\biggr)$ is upper bounded with a high probability. The error term is a summation for all $n$ from $1$ to $K$ and hence the error term is bounded too with a high probability.
\end{proof}
% Please add the following required packages to your document preamble:
% \usepackage[normalem]{ulem}
% \useunder{\uline}{\ul}{}

\begin{table*}[]\centering
\begin{tabular}{|c|c|c|}
\hline
\textbf{Dataset}                                                                                  & \textbf{min $tbd$} & \textbf{max$tbd$} \\ \hline
\begin{tabular}[c]{@{}c@{}}Condition Based Maintenance \\ of Naval Propulsion Plants\end{tabular} & 0.0583             & 0.0641            \\ \hline
Parkinsons Telemonitoring                                                                         & 0.0689             & 0.0827            \\ \hline
Wine Quality                                                                                      & 0.03687            & 0.0475            \\ \hline
Boston Housing                                                                                    & 0.0106             & 0.0263            \\ \hline
Madelon                                                                                           & 0.0919             & 0.1046            \\ \hline
Geographic origin of music                                                                        & 0.1784             & 0.2547            \\ \hline
Skill  craft  1  Master  Table                                                                    & 0.1368             & 0.1662            \\ \hline
Seoul bike sharing demand                                                                         & 0.1740             & 0.1844            \\ \hline
\end{tabular}
\end{table*}

\section{Families of some dependency measures}
\label{appdx:Families}
\begin{enumerate}
    \item \textbf{Energy statistics:} Distance correlation (DCOR), Brownian distance covariance, Partial distance correlation, Partial martingale difference correlation
    \item \textbf{Kernel covariance operators:} Constrained covariance (COCO), Hilbert-Schmidt independence criterion (HSIC), Kernel Target Alignment (KTA)
    \item \textbf{Integral probability  metrics:} Maximum mean discrepancey (MMD), Wasserstein distance,  Dudley metric and Fortet-Mourier metric, Total variation distance.
    \item \textbf{Information theoretic measures:} Mutual information, f-divergence, Renyi divergence, Hellinger distance, Total variation distance, Maximal information coefficient, Total information coefficient
\end{enumerate}

\section{Preliminaries for differential privacy}
\begin{definition}
 $(\epsilon, \delta)$-Differential Privacy (2014)
\end{definition}  A randomized algorithm $\mathcal{A}: \mathcal{X} \rightarrow \mathcal{Y}$ is $(\epsilon, \delta)$-differentially private if, for all neighboring datasets $\mathbf{X}, \mathbf{X}^{\prime} \in \mathcal{X}$ and for all $S \in \mathcal{Y}$
$$
\operatorname{Pr}[\mathcal{A}(\mathbf{X}) \in S] \leq e^{\epsilon} \operatorname{Pr}\left[\mathcal{A}\left(\mathbf{X}^{\prime}\right) \in S\right]+\delta
$$
\begin{definition}
 Post-Processing Invariance
\end{definition}
 Differential privacy is immune to post-processing, meaning that an adversary without any additional knowledge about the dataset $\mathbf{X}$ cannot compute a function on the output $\mathcal{A}(\mathbf{X})$ to violate the stated privacy guarantees.

%\begin{definition}
% Gaussian noise mechanism
%\end{definition}
% A query on a dataset can be privatized by adding controlled noise from a predetermined distribution. One popular private mechanism is the Gaussian mechanism (Dwork et al. 2006), which adds Gaussian noise depending on the query's sensitivity.
\begin{definition}
 $\left(l_{2}\right.$-Global Sensitivity)
\end{definition} Let $f: \mathcal{X} \rightarrow \mathbb{R}^{k}$. The $l_{2^{-}}$ global sensitivity of $f$ is
$$
\Delta_{2}^{(f)}=\max _{\mathbf{x}, \mathbf{X}^{\prime} \in \mathcal{X}}\left\|f(\mathbf{X})-f\left(\mathbf{X}^{\prime}\right)\right\|_{2}
$$
where $\mathbf{X}, \mathbf{X}^{\prime}$ are neighboring databases.

\section{Differentially private random projections}
\label{app:dprp}
We state one of the classic mechanisms from \cite{Kenthapadi} that is relevant to some aspects of our proposed method. We first share a prerequisite definition that is required prior to re-stating this mechanism. We denote the random projection matrix by $\mathbf{P}$, and note that one of the popular choices for building it is to have each entry of the matrix drawn independently from a Normal distribution with mean $0$ and $\sigma^2 =1/k$. We now define the $\ell_{\rho}$-Sensitivity of $P$.

\begin{definition}
 $\left(\ell_{\rho}\right.$-Sensitivity of $\left.P\right)$. Define the $l_{\rho}$-sensitivity of a $d \times k$ projection matrix $P=\left\{P_{i j}\right\}_{d \times k}$ denoted by $w_{\rho}(P)$, as the maximum $\ell_{\rho}$-norm of any row in $P$, i.e., $w_{\rho}(P)=\max _{1 \leq i \leq d}\left(\sum_{j=1}^{k}\left|P_{i j}\right|^{\rho}\right)^{\frac{1}{\rho}}$
Equivalently, $w_{\rho}(P)$ can be defined as $\max _{e_{i}}\left\|e_{i} P\right\|_{\rho}$, where $\left\{e_{i}\right\}_{i=1}^{d}$ are standard basis unit vectors. 

\end{definition}

 \begin{theorem}
 Let $w_{2}(P)$ be the $\ell_{2}$-sensitivity of the projection matrix $P$ (see Definition 2). Assuming $\delta<\frac{1}{2}$, let the entries of the noise matrix be drawn from $N\left(0, \sigma^{2}\right)$ with $\sigma \geq w_{2}(P) \frac{\sqrt{2\left(\ln \left(\frac{1}{2 \delta}\right)+\epsilon\right)}}{\epsilon}$, then releasing $\mathbf{Z=XP+\Delta}$ satisfies $(\epsilon, \delta)$-differential privacy.
 \end{theorem}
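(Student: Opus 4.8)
The plan is to recognize that releasing $\mathbf{Z} = \mathbf{XP} + \mathbf{\Delta}$ is exactly the Gaussian mechanism applied to the linear query $f(\mathbf{X}) = \mathbf{XP}$, so the argument splits into a sensitivity computation for $f$ followed by the standard privacy-loss analysis for spherical Gaussians. First I would compute the $\ell_2$-global sensitivity $\Delta_2^{(f)}$. For neighboring datasets $\mathbf{X}, \mathbf{X}'$ that differ in a single record, the difference $f(\mathbf{X}) - f(\mathbf{X}') = (\mathbf{X}-\mathbf{X}')\mathbf{P}$ is supported on the one differing row, and after normalizing the change to a unit direction $e_i$ its $\ell_2$ norm equals $\|e_i \mathbf{P}\|_2$. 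Taking the worst case over $i$ and invoking the equivalent form $w_2(P) = \max_{e_i}\|e_i P\|_2$ from the definition of $\ell_\rho$-sensitivity, I obtain $\Delta_2^{(f)} = w_2(P)$.

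Next I would analyze the privacy loss. Fixing neighbors with $\|f(\mathbf{X}) - f(\mathbf{X}')\|_2 = \Delta \le w_2(P)$ and writing the output as $z = f(\mathbf{X}) + \sigma g$ with $g \sim N(0, I)$, the log-density ratio between $N(f(\mathbf{X}),\sigma^2 I)$ and $N(f(\mathbf{X}'),\sigma^2 I)$ collapses, after expanding the quadratics and cancelling, to a one-dimensional expression along the direction $f(\mathbf{X}) - f(\mathbf{X}')$. A short calculation then shows that the privacy loss random variable satisfies $L \sim N(\tfrac{\Delta^2}{2\sigma^2}, \tfrac{\Delta^2}{\sigma^2})$. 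The sufficient condition for $(\epsilon,\delta)$-DP is $P(L > \epsilon) \le \delta$, which rewrites as the event that a standard Gaussian exceeds $t = \tfrac{\sigma\epsilon}{\Delta} - \tfrac{\Delta}{2\sigma}$.

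To finish, I would apply the Gaussian tail bound $P(N(0,1) > t) \le \tfrac12 e^{-t^2/2}$, which forces $t \ge \sqrt{2\ln(1/2\delta)}$ and explains the $1/2\delta$ appearing inside the logarithm. Substituting $\sigma = w_2(P)\,\tfrac{\sqrt{2(\ln(1/2\delta)+\epsilon)}}{\epsilon}$ and $\Delta = w_2(P)$, the required inequality $\tfrac{\sigma\epsilon}{\Delta} - \tfrac{\Delta}{2\sigma} \ge \sqrt{2\ln(1/2\delta)}$ reduces, via $\sqrt{a+b}-\sqrt{a} = b/(\sqrt{a+b}+\sqrt{a})$ with $a = 2\ln(1/2\delta)$ and $b = 2\epsilon$, to the elementary inequality $3\sqrt{A} \ge \sqrt{A - 2\epsilon}$ where $A = 2\ln(1/2\delta) + 2\epsilon$, which holds trivially. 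The assumption $\delta < \tfrac12$ guarantees $\ln(1/2\delta) > 0$, so that both $\sqrt{A-2\epsilon}$ and the tail bound are valid.

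I expect the main obstacle to be this last step: pinning down the exact constant $\sqrt{2(\ln(1/2\delta)+\epsilon)}/\epsilon$ rather than the more familiar $\sqrt{2\ln(1.25/\delta)}/\epsilon$. The delicate part is correctly carrying the correction term $-\tfrac{\Delta}{2\sigma}$, arising from the nonzero mean $\tfrac{\Delta^2}{2\sigma^2}$ of the privacy loss, through the tail bound; dropping it yields a weaker constant, and retaining it is precisely what produces the extra $+\epsilon$ inside the logarithm. The sensitivity step, by contrast, is routine once the neighboring relation on $\mathbf{X}$ is fixed, and the release then inherits $(\epsilon,\delta)$-DP directly (with post-processing invariance covering any downstream use of $\mathbf{Z}$).
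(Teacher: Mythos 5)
The paper contains no proof of this statement to compare against: the theorem is a verbatim restatement of a result of Kenthapadi et al., and the paper's proof environment consists solely of the citation \cite{Kenthapadi}. Your argument is therefore a self-contained reconstruction, and it is correct: the reduction to the Gaussian mechanism for $f(\mathbf{X})=\mathbf{XP}$, the privacy-loss law $L \sim N\bigl(\tfrac{\Delta^2}{2\sigma^2},\tfrac{\Delta^2}{\sigma^2}\bigr)$, the sufficient condition $\Pr[L>\epsilon]\le\delta$, the one-sided tail bound $\Pr[N(0,1)>t]\le \tfrac{1}{2}e^{-t^2/2}$ (which is exactly what produces the constant $\ln\tfrac{1}{2\delta}$ in place of the more familiar $\ln\tfrac{1.25}{\delta}$), and the closing algebra $\sqrt{A}-\sqrt{A-2\epsilon}=2\epsilon/(\sqrt{A}+\sqrt{A-2\epsilon})$ reducing the requirement to $3\sqrt{A}\ge\sqrt{A-2\epsilon}$ all check out; this is essentially the original argument behind the cited result. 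Two points should be made explicit rather than implicit. First, the identity $\Delta_2^{(f)}=w_2(P)$ holds only under the neighboring relation in which $\mathbf{X}$ and $\mathbf{X}'$ differ in a single entry by at most one (equivalently, the row difference is, up to scaling, a standard basis vector $e_i$), which is what the paper's definition $w_\rho(P)=\max_{e_i}\|e_iP\|_\rho$ presupposes; if neighbors were allowed to differ in one row by an arbitrary unit-norm vector, the sensitivity would instead be the spectral norm of $P$, which can strictly exceed $w_2(P)$. This should be stated as a hypothesis on the neighboring relation, not obtained by ``normalizing the change to a unit direction.'' Second, you analyze a generic neighbor with $\Delta\le w_2(P)$ but substitute $\Delta=w_2(P)$ at the end; the missing one-line step is that $t(\Delta)=\tfrac{\sigma\epsilon}{\Delta}-\tfrac{\Delta}{2\sigma}$ is decreasing in $\Delta>0$, so $\Delta=w_2(P)$ is indeed the worst case. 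With those two clarifications your proof is complete, and it has the virtue of actually exhibiting the derivation that the paper delegates entirely to the citation.
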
 
\begin{proof}
\cite{Kenthapadi}
\end{proof}

\bibliography{references}

\begin{thebibliography}{}

\bibitem[Asghar et~al., 2021]{Asghar_Ding_Rakotoarivelo_Mrabet_Kaafar_2021}
Asghar, H.~J., Ding, M., Rakotoarivelo, T., Mrabet, S., and Kaafar, D. (2021).
\newblock Differentially private release of datasets using gaussian copula.
\newblock {\em Journal of Privacy and Confidentiality}, 10(2).

\bibitem[Blocki et~al., 2012]{blocki2012johnson}
Blocki, J., Blum, A., Datta, A., and Sheffet, O. (2012).
\newblock The johnson-lindenstrauss transform itself preserves differential
  privacy.
\newblock In {\em 2012 IEEE 53rd Annual Symposium on Foundations of Computer
  Science}, pages 410--419. IEEE.

\bibitem[Canonne, 2020]{canonne2020survey}
Canonne, C.~L. (2020).
\newblock A survey on distribution testing: Your data is big. but is it blue?
\newblock {\em Theory of Computing}, pages 1--100.

\bibitem[Chaudhuri and Hu, 2019]{chaudhuri2019fast}
Chaudhuri, A. and Hu, W. (2019).
\newblock A fast algorithm for computing distance correlation.
\newblock {\em Computational Statistics \& Data Analysis}.

\bibitem[Dwork, 2006]{dwork2006differential}
Dwork, C. (2006).
\newblock Differential privacy.
\newblock In {\em International Colloquium on Automata, Languages, and
  Programming}, pages 1--12. Springer.

\bibitem[Dwork, 2008]{dwork2008differential}
Dwork, C. (2008).
\newblock Differential privacy: A survey of results.
\newblock In {\em International conference on theory and applications of models
  of computation}, pages 1--19. Springer.

\bibitem[Dwork et~al., 2014]{dwork2014algorithmic}
Dwork, C., Roth, A., et~al. (2014).
\newblock The algorithmic foundations of differential privacy.
\newblock {\em Found. Trends Theor. Comput. Sci.}, 9(3-4):211--407.

\bibitem[Fan and Lv, 2018]{fan2018sure}
Fan, J. and Lv, J. (2018).
\newblock Sure independence screening.
\newblock {\em Wiley StatsRef: Statistics Reference Online}.

\bibitem[Gaboardi et~al., 2016]{gaboardi2016differentially}
Gaboardi, M., Lim, H., Rogers, R., and Vadhan, S. (2016).
\newblock Differentially private chi-squared hypothesis testing: Goodness of
  fit and independence testing.
\newblock In {\em International conference on machine learning}, pages
  2111--2120. PMLR.

\bibitem[Gondara and Wang, 2020]{gondara2020differentially}
Gondara, L. and Wang, K. (2020).
\newblock Differentially private small dataset release using random
  projections.
\newblock In {\em Conference on Uncertainty in Artificial Intelligence}, pages
  639--648. PMLR.

\bibitem[Gretton et~al., 2005]{gretton2005measuring}
Gretton, A., Bousquet, O., Smola, A., and Sch{\"o}lkopf, B. (2005).
\newblock Measuring statistical dependence with hilbert-schmidt norms.
\newblock In {\em International conference on algorithmic learning theory},
  pages 63--77. Springer.

\bibitem[Huang and Huo, 2017]{huang2017statistically}
Huang, C. and Huo, X. (2017).
\newblock A statistically and numerically efficient independence test based on
  random projections and distance covariance.
\newblock {\em arXiv preprint arXiv:1701.06054}.

\bibitem[Huo and Sz{\'e}kely, 2016]{huo2016fast}
Huo, X. and Sz{\'e}kely, G.~J. (2016).
\newblock Fast computing for distance covariance.
\newblock {\em Technometrics}, 58(4):435--447.

\bibitem[Kenthapadi et~al., 2013]{Kenthapadi}
Kenthapadi, K., Korolova, A., Mironov, I., and Mishra, N. (2013).
\newblock Privacy via the johnson-lindenstrauss transform.
\newblock {\em Journal of Privacy and Confidentiality}, 5(1).

\bibitem[Kusner et~al., 2015]{kusner2015inferring}
Kusner, M.~J., Sun, Y., Sridharan, K., and Weinberger, K.~Q. (2015).
\newblock Inferring the causal direction privately.
\newblock {\em stat}, 1050:17.

\bibitem[Kusner et~al., 2016]{kusner2016private}
Kusner, M.~J., Sun, Y., Sridharan, K., and Weinberger, K.~Q. (2016).
\newblock Private causal inference.
\newblock In {\em Artificial Intelligence and Statistics}, pages 1308--1317.
  PMLR.

\bibitem[Li et~al., 2014]{li2014differentially}
Li, H., Xiong, L., and Jiang, X. (2014).
\newblock Differentially private synthesization of multi-dimensional data using
  copula functions.
\newblock In {\em Advances in database technology: proceedings. International
  conference on extending database technology}, volume 2014, page 475. NIH
  Public Access.

\bibitem[Li et~al., 2012]{li2012feature}
Li, R., Zhong, W., and Zhu, L. (2012).
\newblock Feature screening via distance correlation learning.
\newblock {\em Journal of the American Statistical Association},
  107(499):1129--1139.

\bibitem[Lu et~al., 2021]{lu2021conditional}
Lu, S., Chen, X., and Wang, H. (2021).
\newblock Conditional distance correlation sure independence screening for
  ultra-high dimensional survival data.
\newblock {\em Communications in Statistics-Theory and Methods},
  50(8):1936--1953.

\bibitem[Rizzo and Sz{\'e}kely, 2016]{rizzo2016energy}
Rizzo, M.~L. and Sz{\'e}kely, G.~J. (2016).
\newblock Energy distance.
\newblock {\em wiley interdisciplinary reviews: Computational statistics},
  8(1):27--38.

\bibitem[Shen et~al., 2019]{shen2019exact}
Shen, C., Priebe, C.~E., and Vogelstein, J.~T. (2019).
\newblock The exact equivalence of independence testing and two-sample testing.
\newblock {\em arXiv preprint arXiv:1910.08883}.

\bibitem[Shen and Vogelstein, 2021]{shen2021exact}
Shen, C. and Vogelstein, J.~T. (2021).
\newblock The exact equivalence of distance and kernel methods in hypothesis
  testing.
\newblock {\em AStA Advances in Statistical Analysis}, 105(3):385--403.

\bibitem[Swanberg et~al., 2019]{swanberg}
Swanberg, M., Globus-Harris, I., Griffith, I., Ritz, A., Groce, A., and Bray,
  A. (2019).
\newblock Improved differentially private analysis of variance.
\newblock {\em Proceedings on Privacy Enhancing Technologies},
  2019(3):310--330.

\bibitem[Sz{\'e}kely and Rizzo, 2013a]{szekely2013distance}
Sz{\'e}kely, G.~J. and Rizzo, M.~L. (2013a).
\newblock The distance correlation t-test of independence in high dimension.
\newblock {\em Journal of Multivariate Analysis}, 117:193--213.

\bibitem[Sz{\'e}kely and Rizzo, 2013b]{szekely2013energy}
Sz{\'e}kely, G.~J. and Rizzo, M.~L. (2013b).
\newblock Energy statistics: A class of statistics based on distances.
\newblock {\em Journal of statistical planning and inference},
  143(8):1249--1272.

\bibitem[Sz{\'e}kely and Rizzo, 2017]{szekely2017energy}
Sz{\'e}kely, G.~J. and Rizzo, M.~L. (2017).
\newblock The energy of data.
\newblock {\em Annual Review of Statistics and Its Application}, 4:447--479.

\bibitem[Sz{\'e}kely et~al., 2007]{szekely2007measuring}
Sz{\'e}kely, G.~J., Rizzo, M.~L., and Bakirov, N.~K. (2007).
\newblock Measuring and testing dependence by correlation of distances.
\newblock {\em The annals of statistics}, 35(6):2769--2794.

\bibitem[Thakurta and Smith, 2013]{thakurta2013differentially}
Thakurta, A.~G. and Smith, A. (2013).
\newblock Differentially private feature selection via stability arguments, and
  the robustness of the lasso.
\newblock In {\em Conference on Learning Theory}, pages 819--850. PMLR.

\bibitem[Upadhyay, 2013]{upadhyay2013random}
Upadhyay, J. (2013).
\newblock Random projections, graph sparsification, and differential privacy.
\newblock In {\em International Conference on the Theory and Application of
  Cryptology and Information Security}, pages 276--295. Springer.

\bibitem[Xu et~al., 2017]{xu2017dppro}
Xu, C., Ren, J., Zhang, Y., Qin, Z., and Ren, K. (2017).
\newblock Dppro: Differentially private high-dimensional data release via
  random projection.
\newblock {\em IEEE Transactions on Information Forensics and Security},
  12(12):3081--3093.

\bibitem[Zhong and Zhu, 2015]{zhong2015iterative}
Zhong, W. and Zhu, L. (2015).
\newblock An iterative approach to distance correlation-based sure independence
  screening.
\newblock {\em Journal of Statistical Computation and Simulation},
  85(11):2331--2345.

\end{thebibliography}

\end{document}